
\documentclass{article}
\usepackage[utf8]{inputenc} 
\usepackage[T1]{fontenc}    
\usepackage{microtype}
\usepackage{graphicx}
\usepackage{caption}
\usepackage{enumitem}
\usepackage{subcaption}
\usepackage{epstopdf}
\usepackage{booktabs} 

\usepackage{times}
\usepackage{booktabs}
\usepackage{amsthm}
\usepackage{amsmath}
\usepackage{mathrsfs}
\usepackage{amsfonts}
\usepackage{algorithm}
\usepackage{algorithmic}
\theoremstyle{plain}
\newtheorem{thm}{Theorem}
\newtheorem{asu}{Assumption}
\newtheorem{lem}{Lemma}
\newtheorem{condi}{Condition}

\newtheorem{cor}[thm]{Corollary}
\newtheorem{defn}{Definition}

\theoremstyle{definition}
\newtheorem{rem}{Remark}

\DeclareMathOperator*{\argmin}{argmin}
\makeatletter
\newcommand*\bigcdot{\mathpalette\bigcdot@{.5}}
\newcommand*\bigcdot@[2]{\mathbin{\vcenter{\hbox{\scalebox{#2}{$\m@th#1\bullet$}}}}}
\makeatother

\usepackage{hyperref}


    \usepackage[preprint]{neurips_2019}



\title{SAdam: A Variant of Adam for \\ Strongly Convex Functions}
\author{%
 Guanghui Wang$^{\sharp}$, Shiyin Lu$^{\sharp}$, Weiwei Tu$^{\flat}$, Lijun Zhang$^{\sharp}$\\
 $^{\sharp}$National Key Laboratory for Novel Software Technology,
Nanjing University, China\\
$^{\flat}$4Paradigm Inc., Beijing, China\\
\texttt{\{wanggh,lusy,zhanglj\}@lamda.nju.edu.cn, tuwwcn@gmail.com}
}
\begin{document}
\maketitle



\begin{abstract}
The Adam algorithm has become extremely popular for large-scale machine learning. Under convexity condition, it has been proved to enjoy a {data-dependant} $O(\sqrt{T})$ regret bound where $T$ is the time horizon. However, whether \emph{strong convexity} can be utilized to further improve the performance remains an open problem. In this paper, we give an affirmative answer by developing a variant of Adam (referred to as \emph{SAdam}) which achieves a data-dependant $O(\log T)$ regret bound for strongly convex functions. The essential idea is to maintain a faster decaying yet under controlled step size for exploiting strong convexity. In addition, under a special configuration of hyperparameters, our SAdam reduces to SC-RMSprop, a recently proposed variant of RMSprop for strongly convex functions, for which we provide the \emph{first} data-dependent logarithmic regret bound. Empirical results on optimizing strongly convex functions and training deep networks demonstrate the effectiveness of our method.
\end{abstract}
\section{Introduction}
Online Convex Optimization (OCO) is a well-established learning framework which has both theoretical and practical appeals \citep{shalev2012online}. It is performed in a sequence of consecutive rounds: In each round $t$, firstly a learner chooses a decision $\textbf{x}_t$ from a convex set $\mathcal{D}\subseteq\mathbb{R}^d$, at the same time, an adversary reveals a loss function $f_t(\cdot):\mathcal{D}\mapsto\mathbb{R}$, and consequently the learner suffers a loss $f_t(\textbf{x}_t)$. The goal is to minimize regret, defined as the difference between the cumulative loss of the learner and that of the best decision in hindsight \citep{hazan2016introduction}:
$$R(T):=\sum_{t=1}^Tf_t(\textbf{x}_t)-\min\limits_{\textbf{x}\in\mathcal{D}}\sum_{t=1}^Tf_t(\textbf{x}).$$
The most classic algorithm for OCO is Online Gradient Descent (OGD)   \citep{zinkevich2003online}, which attains an $O(\sqrt{T})$ regret. OGD iteratively performs descent step towards gradient direction  with a \emph{predetermined} step size, which is oblivious to the characteristics of the data being observed. As a result, its regret bound is \emph{data-independent}, and can not benefit from the structure of data. To address this limitation, various of \emph{adaptive} gradient methods, such as Adagrad \citep{duchi2011adaptive}, RMSprop \citep{tieleman2012lecture} and Adadelta \citep{zeiler2012adadelta} have been proposed to exploit the geometry of historical data. Among them, Adam \citep{kingma2014adam}, which dynamically adjusts the step size and the update direction by exponential average of the past gradients, has been extensively popular and successfully applied to many applications \citep{xu2015show,gregor2015draw,kiros2015skip,denkowski2017stronger,bahar2017empirical}. Despite the outstanding performance, \citeauthor{reddi2018convergence} (\citeyear{reddi2018convergence}) pointed out that Adam suffers the  \emph{non-convergence} issue, and developed two modified versions, namely AMSgrad and AdamNC. These variants are equipped with  \emph{data-dependant} regret bounds, which are  $O(\sqrt{T})$ in the worst case and become tighter when gradients are sparse.

While the theoretical behavior of Adam in convex case becomes clear, it remains an open problem whether \emph{strong convexity} can be exploited to achieve better performance. Such property arises, for instance, in support vector machines as well as other regularized learning problems, and it is well-known that the vanilla OGD with appropriately chosen step size enjoys a much better $O(\log T)$ regret bound for strongly convex functions \citep{hazan2007logarithmic}. In this paper, we propose a variant of Adam adapted to strongly convex functions, referred to as \emph{SAdam}. Our algorithm follows the general framework of Adam,
yet keeping a faster decaying step size controlled by time-variant heperparameters to exploit strong convexity. Theoretical analysis demonstrates that SAdam achieves a data-dependant $O(\log T)$ regret bound for strongly convex functions, which means that it converges faster than AMSgrad and AdamNC in such cases, and also enjoys a huge gain in the face of sparse gradients.

Furthermore, under a special configuration of heperparameters, the proposed algorithm reduces to the SC-RMSprop \citep{mukkamala2017variants}, which is a variant of RMSprop algorithm for strongly convex functions. We provide an alternative proof for SC-RMSprop, and establish the \emph{first} data-dependent logarithmic regret bound. Finally, we evaluate the proposed algorithm on strongly convex problems as well as deep networks, and the empirical results demonstrate the effectiveness of our method.

{\bf Notation.} Throughout the paper, we use lower case bold face letters to denote vectors, lower case letters to denote scalars, and upper case letters to denote matrices. We use $\|\cdot\|$ to denote the $\ell_2$-norm and $\|\cdot\|_{\infty}$ the infinite norm. For a positive definite matrix $H\in\mathbb{R}^{d\times d}$, the weighted $\ell_2$-norm is defined by $\|\textbf{x}\|_H^2=\textbf{x}^{\top}H\textbf{x}$. The $H$-weighted projection $\Pi_{\mathcal{D}}^{H}(\textbf{x})$ of $\textbf{x}$ onto $\mathcal{D}$ is defined by
{\begin{equation*}
\Pi_{\mathcal{D}}^{H}(\textbf{x})=\argmin\limits_{\textbf{y}\in\mathcal{D}}\|\textbf{y}-\textbf{x}\|^2_H.
\end{equation*}}
\!\!We use $\textbf{g}_t$ to denote the gradient of $f_t(\cdot)$ at $\textbf{x}_t$. For vector sequence $\{\textbf{v}_t\}_{t=1}^T$, we denote the $i$-th element of $\textbf{v}_t$ by $v_{t,i}$. For diagonal matrix sequence $\{M_t\}_{t=1}^T$, we use $m_{t,i}$ to denote the $i$-th element in the diagonal of $M_t$. We use $\textbf{g}_{1:t,i}=[{g}_{1,i},\cdots,{g}_{t,i}]$ to denote the vector obtained by concatenating the $i$-th element of the gradient sequence $\{
\textbf{g}_t\}_{t=1}^T$.
\section{Related Work}
\label{se:related work}
In this section, we briefly review related work in online convex and strongly convex optimization.

In the literature, most studies are devoted to the minimization of regret for convex functions. Under the assumptions  that the infinite norm of gradients and the diameter of the decision set are bounded, OGD with step size on the order of $O(1/\sqrt{t})$ (referred to as convex OGD) achieves a data-independent $O(d\sqrt{T})$  regret  \citep{zinkevich2003online}, where $d$ is the dimension. To conduct more informative updates, \citeauthor{duchi2011adaptive} (\citeyear{duchi2011adaptive}) introduce Adagrad algorithm, which 
adjusts the step size of OGD in a per-dimension basis according to the geometry of the past gradients. In particular,
the diagonal version of the algorithm updates decisions as
\begin{equation}
\label{eq:updateadag}
\textbf{x}_{t+1}=\textbf{x}_{t}-\frac{\alpha}{\sqrt{t}}{V_t}^{-1/2}\textbf{g}_{t}
\end{equation}
where $\alpha>0$ is a constant factor, $V_t$ is a $d\times d$ diagonal matrix, and $\forall i\in[d], v_{t,i}=(\sum_{j=1}^tg^2_{j,i})/t$ is the arithmetic average of the square of the $i$-th elements of the past gradients. Intuitively, while the step size of Adagrad, i.e., $(\alpha/\sqrt{t})V_t^{-1/2}$, decreases generally on the order of $O(1/\sqrt{t})$ as that in convex OGD, the additional matrix $V_t^{-1/2}$ will automatically increase step sizes for sparse dimensions in order to seize the infrequent yet valuable information therein. For convex functions, Adagrad enjoys an $O(\sum_{i=1}^d\|\textbf{g}_{1:T,i}\|)$ regret, which is $O(d\sqrt{T})$ in the worst case and becomes tighter when gradients are sparse.

Although Adagrad works well in sparse cases, its performance has been found to deteriorate when gradients are dense due to the rapid decay of the step size since it uses all the past gradients in the update \citep{zeiler2012adadelta}. To tackle this issue, \citeauthor{tieleman2012lecture}  (\citeyear{tieleman2012lecture}) propose RMSprop, which alters the arithmetic average procedure with Exponential Moving Average (EMA),
i.e.,
$$V_t=\beta V_{t-1}+(1-\beta){\rm diag}({\textbf{g}}_t{\textbf{g}}_t^{\top})$$
where $\beta\in[0,1]$ is a hyperparameter, and ${\rm diag}{(\cdot)}$ denotes extracting the diagonal matrix. In this way, the weights assigned to past gradients decay exponentially so that the reliance of the update is essentially limited to recent few gradients.
Since the invention of RMSprop, many EMA variants of Adagrad  have been developed \citep{zeiler2012adadelta,kingma2014adam,dozat2016incorporating}. One of the most popular algorithms is  Adam \citep{kingma2014adam}, where the first-order momentum acceleration, shown in \eqref{eq:updateadam},  is incorporated into RMSprop to boost the performance:
\begin{align}
\label{eq:updateadam}
\hat{\textbf{g}}_t&=\beta_{1}\hat{\textbf{g}}_{t-1}+(1-\beta_{1})\textbf{g}_t\\
V_t&=\beta_2 V_{t-1}+(1-\beta_2){\rm diag}({\textbf{g}}_t{\textbf{g}}_t^{\top})\\
\textbf{x}_{t+1}&=\textbf{x}_{t}-\frac{\alpha}{\sqrt{t}}{V_t}^{-1/2}\hat{\textbf{g}}_{t}\label{eq:updateadam2}
\end{align}
While it has been successfully applied to various  practical applications, a recent study by \citeauthor{reddi2018convergence} (\citeyear{reddi2018convergence}) shows that Adam could fail to converge to the optimal decision even in some simple one-dimensional convex scenarios due to the potential rapid fluctuation of the step size. To resolve this issue, they design two modified versions of Adam. The first one is AMSgrad, where an additional element-wise maximization  procedure, i.e.,
\begin{equation*}
\begin{split}
\hat{V}_t&=\max\left\{\hat{V}_{t-1},V_t\right\}\\
\textbf{x}_{t+1}&=\textbf{x}_{t}-\frac{\alpha}{\sqrt{t}}{\hat{V}_t}^{-1/2}\hat{\textbf{g}}_{t}
\end{split}
\end{equation*}
is employed before the update of $\textbf{x}_{t}$ to ensure a stable step size. The other is AdamNC,  where the framework of Adam remains unchanged, yet a time-variant $\beta_2$ (i.e., $\beta_{2t}$) is adopted to keep the step size under control.
Theoretically,  the two algorithms achieve data-dependent $O(\sqrt{T}\sum_{i=1}^d v_{T,i}+\sum_{i=1}^d\|\textbf{g}_{1:T,i}\|\log T)$ and  $O(\sqrt{T}\sum_{i=1}^d v_{T,i}+\sum_{i=1}^d\|\textbf{g}_{1:T,i}\|)$ regrets respectively. In the worst case, they  suffer $O(d\sqrt{T}\log T)$ and $O(d\sqrt{T})$ regrets respectively, and enjoy a huge gain when data is sparse. 

Note that the aforementioned algorithms are mainly analysed in general convex settings and suffer at least $O(d\sqrt{T})$ regret in the worst case. For online strongly convex optimization, the classical OGD with step size proportional to $O(1/t)$ (referred to as strongly convex OGD) achieves a data-independent $O({d}\log{T})$ regret \citep{hazan2007logarithmic}. Inspired by this, \citeauthor{mukkamala2017variants} (\citeyear{mukkamala2017variants}) modify  the update rule of Adagrad in \eqref{eq:updateadag} as follows
$$\textbf{x}_{t+1}=\textbf{x}_{t}-\frac{\alpha}{t}{V_t}^{-1}\textbf{g}_{t}$$
so that the step size decays approximately on the order of $O(1/t)$, which is similar to that in strongly convex OGD. The new algorithm, named SC-Adagrad, is proved to enjoy a data-dependant regret bound of $O(\sum_{i=1}^d\log(\|\textbf{g}_{1:T,i}\|^2))$, which is $O(d\log T)$ in the worst case. They further extend this idea to RMSprop, and propose an algorithm named SC-RMSprop. However, as pointed out in Section \ref{s3}, their regret bound for SC-RMSprop is in fact \emph{data-independent}, and our paper provide the first data-dependent regret bound for this algorithm.

Very recently, several modifications of Adam adapted to non-convex settings have been developed  \citep{chen2018convergence,basu2018convergence,zhang2018gadam,shazeer2018adafactor}. However, to our knowledge, none of these algorithms is particularly designed for strongly convex functions, nor enjoys a  logarithmic regret bound.

\section{SAdam}
\label{s3}
In this section, we first describe the proposed algorithm, then state its theoretical guarantees, and finally compare it with SC-RMSprop algorithm.
\subsection{The Algorithm}
Before proceeding to our algorithm, following previous studies, we introduce some standard definitions \citep{boyd2004convex} and assumptions \citep{reddi2018convergence}.
\begin{defn}
\label{def:strongly convex}
A function $f(\cdot):\mathcal{D}\mapsto\mathbb{R}$ is $\lambda$-strongly convex if \emph{$\forall \textbf{x}_1, \textbf{x}_2\in\mathcal{D},$
{\setlength\abovedisplayskip{1pt}
\setlength\belowdisplayskip{1pt}
\begin{equation}
\label{def:sc}
\begin{split}
\!\!\!f(\textbf{x}_1)\geq f(\textbf{x}_2)+ \nabla f(\textbf{x}_2)^{\top}(\textbf{x}_1-\textbf{x}_2)+\frac{\lambda}{2}\|\textbf{x}_1-\textbf{x}_2\|^2.
\end{split}
\end{equation}}
}
\end{defn}
\begin{asu}
\label{asu:osco}
The infinite norm of the gradients of all loss functions are bounded by $G_{\infty}$, i.e., their exists a constant $G_{\infty}>0$ such that \emph{$\max_{\textbf{x}\in \mathcal{D}}\|\nabla f_t(\textbf{x})\|_{\infty}\leq G_{\infty}$} holds for all $t\in[T]$.
\end{asu}
\begin{asu}
\label{asu:def}
The decision set $\mathcal{D}$ is bounded. Specifically, their exists a constant $D_{\infty}>0$ such that
\emph{$\max_{\textbf{x}_1,\textbf{x}_2\in\mathcal{D}}\|\textbf{x}_1-\textbf{x}_2\|_{\infty}\leq D_{\infty}$}.
\end{asu}

\begin{algorithm}[t]
\caption{SAdam}
\label{SC-Adam}
\begin{algorithmic} [1]
\STATE \textbf{Input:} $\{\beta_{1t}\}_{t=1}^T, \{\beta_{2t}\}_{t=1}^T, \delta$
\STATE \textbf{Initialize:} $\hat{\textbf{g}}_0=\textbf{0}$, $\hat{{V}}_0=\textbf{0}_{d\times d}, \textbf{x}_1=\textbf{0}$.
\FOR{$t=1,\dots,T$}
\STATE $\textbf{g}_t=\nabla f_t(\textbf{x}_t)$
\STATE $\hat{\textbf{g}}_t=\beta_{1t}\hat{\textbf{g}}_{t-1}+(1-\beta_{1t})\textbf{g}_t$
\STATE ${{V}}_t=\beta_{2t}{{V}}_{t-1}+(1-\beta_{2t}){\rm diag}(\textbf{g}_t\textbf{g}^{\top}_t)$
\STATE $\hat{{V}}_t=V_t+\frac{\delta}{t}I_d$
\STATE $\textbf{x}_{t+1}=\Pi_{\mathcal{D}}^{\hat{V}_t}\left(\textbf{x}_t-{\frac{\alpha}{t}}{\hat{V}^{-1}_{t}}\hat{\textbf{g}}_t\right)$
\ENDFOR
\end{algorithmic}
\end{algorithm}
We are now ready to present our algorithm, which follows the general framework of Adam and is summarized in Algorithm \ref{SC-Adam}. In each round $t$, we firstly observe the gradient at $\textbf{x}_t$ (Step 4), then compute the first-order momentum $\hat{\textbf{g}}_t$ (Step 5). Here $\beta_{1t}$ a time-variant non-increasing hyperparameter, which is commonly set as $\beta_{1t}=\beta_{1}\nu^{t-1}$, where $\beta_1, \nu\in[0,1)$ \citep{kingma2014adam,reddi2018convergence}. Next, we  calculate the second-order momentum $V_t$ by EMA of the square of past gradients (Step 6). This procedure is controlled by  $\beta_{2t}$, whose value will be discussed later. After that, we add a vanishing factor $\frac{\delta}{t}$ to the diagonal of $V_t$ and get $\hat{V}_t$ (Step 7), which is a standard technique for avoiding too large steps caused by small gradients in the beginning iterations. Finally, we update the decision by $\hat{\textbf{g}}_t$ and $\hat{V}_t$, which is then projected onto the decision set (Step 8).

While SAdam is inspired by Adam, there exist two key differences: One is the update rule of $\textbf{x}_{t}$ in Step 8, the other is the configuration of $\beta_{2t}$ in Step 6. Intuitively, both modifications stem from  strongly convex OGD, and jointly result in a faster decaying yet under controlled step size which helps utilize the strong convexity while preserving the practical  benefits of Adam. Specifically, in the first modification, we remove the square root operation in \eqref{eq:updateadam2} of Adam, and update
$\textbf{x}_{t}$ at Step 8 as follows
\begin{equation}
\label{eq:updatext1}
\textbf{x}_{t+1}=\textbf{x}_t-\frac{\alpha}{t}{\hat{V}^{-1}_{t}}\hat{\textbf{g}}_t.
\end{equation}
In this way, the step size used to update the $i$-th element of $\textbf{x}_{t}$ is $\frac{\alpha}{t}\hat{v}^{-1}_{t,i}$, which decays in general on the order of $O(1/t)$, and  can still be automatically tuned in a per-feature basis via the EMA of the historical gradients.

The second modification is made to $\beta_{2t}$, which determines the value of $V_t$ and thus also controls the decaying rate of the step size. To help understand the motivation behind our algorithm, we first revisit Adam, where $\beta_{2t}$ is simply set to be constant, which, however, could cause rapid fluctuation of the step size, and further leads to the non-convergence issue. To ensure convergence, \citeauthor{reddi2018convergence} (\citeyear{reddi2018convergence}) propose that $\beta_{2t}$ should satisfy the following two conditions:
\begin{condi}
\label{condi11111}
$\forall t\in[T]$ and $i\in [d],$
\begin{equation*}
\frac{\sqrt{t}{v^{1/2}_{t,i}}}{\alpha}- \frac{\sqrt{t-1}{v^{1/2}_{t-1,i}}}{\alpha}\geq 0.
\end{equation*}
\end{condi}
\begin{condi}
\label{condif}
For some $\zeta>0$ and $\forall t\in[T]$, $i\in[d]$,
\begin{equation*}
\sqrt{t\sum_{j=1}^t\Pi^{t-j}_{k=1}\beta_{2(t-k+1)}(1-\beta_{2j})g^2_{j,i}}\geq \frac{1}{\zeta}\sqrt{\sum_{j=1}^tg^2_{j,i}}.
\end{equation*}
\end{condi}
The first condition implies that the difference between the inverses of step sizes in two consecutive rounds is positive. It is inherently motivated by convex OGD (i.e., OGD with step size $\frac{\alpha}{\sqrt{t}}$, where $\alpha>0$ is a constant factor), where
$$\frac{\sqrt{t}}{\alpha}-\frac{\sqrt{t-1}}{\alpha}\geq 0, \forall t\in[T]$$
is a key condition used in the analysis. We first modify Condition \ref{condi11111} by mimicking the behavior of  strongly convex OGD as we are devoted to minimizing regret for strongly convex functions. In strongly convex OGD \citep{hazan2007logarithmic}, the step size at each round $t$ is set as $\frac{\alpha}{t}$ with $\alpha\geq\frac{1}{\lambda}$ for $\lambda$-strongly convex functions. Under this configuration, we have
\begin{equation}
\label{impsc}
\frac{t}{\alpha}-\frac{t-1}{\alpha}\leq \lambda, \forall t\in[T].
\end{equation}
Motivated by this, we propose the following condition for our SAdam, which is an analog to \eqref{impsc}.
\begin{condi}
\label{condi1}
Their exists a constant $C>0$ such that for any $\alpha\geq\frac{C}{\lambda}$, we have $\forall t\in[T]$ and $i\in[d]$,
 \begin{equation}
  \label{ineq:thm1ineq2}
 \frac{tv_{t,i}}{\alpha}-\frac{(t-1)v_{t-1,i}}{\alpha}\leq \lambda(1-\beta_1).
 \end{equation}
\end{condi}
Note that the extra $(1-\beta_1)$ in the righthand side of \eqref{ineq:thm1ineq2} is necessary because SAdam involves the first-order momentum in its update.

Finally, since the step size of SAdam scales with $1/t$ rather than $1/\sqrt{t}$ in Adam, we modify Condition \ref{condif} accordingly as follows:
\begin{condi}
\label{condi2}
For some $\zeta>0$, $\forall t\in[T]$ and $i\in[d]$,
\begin{equation}
\label{ineq:thmlowerbound}
t\sum_{j=1}^t{\prod_{k=1}^{t-j}\beta_{2(t-k+1)}(1-\beta_{2j})g^2_{j,i}}\geq \frac{1}{\zeta}{\sum_{j=1}^tg^2_{j,i}}.
\end{equation}
\end{condi}
\subsection{Theoretical Guarantees}
In the following, we give a general regret bound when the two conditions are satisfied.
\begin{thm}
\label{th1}
 Suppose Assumptions \ref{asu:osco} and \ref{asu:def} hold, and all loss functions $f_1(\cdot),\dots, f_T(\cdot)$ are $\lambda$-strongly convex. Let $\delta>0$, $\beta_{1t}=\beta_1\nu^{t-1}$ where $\beta_1\in[0,1), \nu\in[0,1)$, and $\{\beta_{2t}\}_{t=1}^T\in[0,1]^T$ be a parameter sequence such that Conditions \ref{condi1}  and \ref{condi2} are satisfied.
Let $\alpha\geq\frac{C}{\lambda}$. The regret of SAdam satisfies
\label{thm:regret:OSCO-AMSGRAD}
\begin{equation}
\begin{split}
\label{th1eq}
R(T)\leq& \frac{dD_{\infty}^2\delta}{2\alpha(1-\beta_1)}+\frac{d\beta_1D_{\infty}^2(G_{\infty}^2+\delta)}{2\alpha(1-\beta_1)(\nu-1)^2}+\frac{\alpha\zeta}{(1-\beta_1)^3}\sum_{i=1}^d\log\left(\frac{1}{\zeta\delta}\sum_{j=1}^Tg^2_{j,i}+1\right).
\end{split}
\end{equation}
\end{thm}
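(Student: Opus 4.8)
The plan is to linearize the regret, bound the linearized sum by a weighted telescoping term plus gradient-norm terms produced by the update, and then read off the three pieces of \eqref{th1eq} from Conditions \ref{condi1} and \ref{condi2}. First I would fix the comparator $\textbf{x}^*$ and apply $\lambda$-strong convexity (Definition \ref{def:strongly convex}) termwise to get
$$R(T)\le \sum_{t=1}^T\textbf{g}_t^\top(\textbf{x}_t-\textbf{x}^*)-\frac{\lambda}{2}\sum_{t=1}^T\|\textbf{x}_t-\textbf{x}^*\|^2.$$
The whole argument then hinges on showing that this negative quadratic is essentially cancelled by a matching quadratic generated by the algorithm, leaving only logarithmic residue.

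Next I would extract a descent inequality from Step 8. Since $\textbf{x}^*\in\mathcal{D}$ and the $\hat V_t$-weighted projection is non-expansive in $\|\cdot\|_{\hat V_t}$, expanding $\|\textbf{x}_{t+1}-\textbf{x}^*\|^2_{\hat V_t}\le\|\textbf{x}_t-\frac{\alpha}{t}\hat V_t^{-1}\hat{\textbf{g}}_t-\textbf{x}^*\|^2_{\hat V_t}$ gives, coordinatewise,
$$\hat g_{t,i}(x_{t,i}-x_i^*)\le\frac{t\hat v_{t,i}}{2\alpha}\big[(x_{t,i}-x_i^*)^2-(x_{t+1,i}-x_i^*)^2\big]+\frac{\alpha}{2t}\frac{\hat g_{t,i}^2}{\hat v_{t,i}}.$$
To return to the true gradient I would use the momentum recursion in the form $g_{t,i}=\frac{1}{1-\beta_{1t}}(\hat g_{t,i}-\beta_{1t}\hat g_{t-1,i})$, so that each $g_{t,i}(x_{t,i}-x_i^*)$ splits into a main term carrying the descent bound above (weighted by $1/(1-\beta_{1t})\le 1/(1-\beta_1)$) and a cross term $-\frac{\beta_{1t}}{1-\beta_{1t}}\hat g_{t-1,i}(x_{t,i}-x_i^*)$.

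The main term I would sum by Abel summation with weights $\frac{t\hat v_{t,i}}{2\alpha(1-\beta_{1t})}$, splitting $t\hat v_{t,i}=tv_{t,i}+\delta$. The $\delta$ piece has non-increasing weights (since $\beta_{1t}$ is non-increasing), so only its $t=1$ boundary survives and yields $\frac{dD_\infty^2\delta}{2\alpha(1-\beta_1)}$ after $(x_{1,i}-x_i^*)^2\le D_\infty^2$; the $v$ piece produces consecutive differences whose coefficient, after dividing by $1-\beta_{1t}\ge1-\beta_1$, is forced to be at most $\frac{\lambda}{2}$ by Condition \ref{condi1}, so this piece contributes at most $\frac{\lambda}{2}\sum_t\|\textbf{x}_t-\textbf{x}^*\|^2$ and exactly absorbs the strong-convexity term. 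For the cross term I would apply Young's inequality $|\hat g_{t-1,i}(x_{t,i}-x_i^*)|\le\frac{\alpha}{2t\hat v_{t,i}}\hat g_{t-1,i}^2+\frac{t\hat v_{t,i}}{2\alpha}(x_{t,i}-x_i^*)^2$; using $\hat v_{t,i}\le G_\infty^2+\delta$, $(x_{t,i}-x_i^*)^2\le D_\infty^2$, $\frac{\beta_{1t}}{1-\beta_{1t}}\le\frac{\beta_1\nu^{t-1}}{1-\beta_1}$, and $\sum_t t\nu^{t-1}\le(1-\nu)^{-2}$, its quadratic part gives precisely the middle term $\frac{d\beta_1 D_\infty^2(G_\infty^2+\delta)}{2\alpha(1-\beta_1)(\nu-1)^2}$.

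It remains to bound the gradient-norm terms $\sum_t\frac{\alpha}{2t}\frac{\hat g_{t,i}^2}{\hat v_{t,i}}$ together with the $\hat g_{t-1,i}^2$ part of the cross term. Here Condition \ref{condi2} gives $t\hat v_{t,i}=tv_{t,i}+\delta\ge\frac{1}{\zeta}\sum_{j\le t}g_{j,i}^2+\delta$, hence $\frac{1}{t\hat v_{t,i}}\le\frac{\zeta}{\sum_{j\le t}g_{j,i}^2+\zeta\delta}$; after bounding $\hat g_{t,i}^2$ by a Cauchy--Schwarz--controlled combination of past $g_{j,i}^2$, interchanging the order of summation, and absorbing the resulting geometric momentum sums into powers of $(1-\beta_1)^{-1}$, the standard log-sum inequality $\sum_t\frac{a_t}{b+\sum_{j\le t}a_j}\le\log\big(1+\frac1b\sum_t a_t\big)$ with $a_t=g_{t,i}^2$, $b=\zeta\delta$ turns the cumulative ratio into $\log\big(\frac{1}{\zeta\delta}\sum_j g_{j,i}^2+1\big)$, giving the final term with coefficient $\frac{\alpha\zeta}{(1-\beta_1)^3}$; summing over $i\in[d]$ yields \eqref{th1eq}. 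I expect the first-order momentum to be the main obstacle, specifically reconciling the first-moment average $\hat g_{t,i}^2$ in the numerator with the second-moment lower bound of Condition \ref{condi2} in the denominator while keeping the $(1-\beta_1)$ bookkeeping tight enough to land on a clean logarithm; by contrast, the cancellation of the quadratic against strong convexity is essentially automatic once Condition \ref{condi1} is in place.
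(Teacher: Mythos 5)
Your proposal follows essentially the same route as the paper's proof: the same strong-convexity linearization, the same non-expansive-projection descent inequality, the same three-way split into a telescoping term killed by Condition \ref{condi1}, a Young's-inequality cross term giving the $(\nu-1)^{-2}$ piece, and a gradient-norm sum handled via Condition \ref{condi2}, Cauchy--Schwarz, and the log-sum inequality. The only deviation is a cosmetic one in the choice of Young's-inequality weights for the momentum cross term (the paper uses $\alpha_{t-1}$ and $\hat{V}_{t-1}$, which lines the gradient part up exactly with the main gradient-norm sum), and this does not affect correctness.
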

\begin{rem}
The above theorem  implies that our algorithm enjoys an $O(\sum_{i=1}^d\log(\|\textbf{g}_{1:T,i}\|^2))$ regret bound, which is $O(d\log T)$ in the worst case, and automatically becomes tighter whenever the gradients are small or sparse such that $\|\textbf{g}_{1:T,i}\|^2\ll G^2_{\infty}T$ for some $i\in[d]$. The superiority of data-dependent bounds have been witnessed by a long list of literature, such as  \cite{duchi2011adaptive,mukkamala2017variants,reddi2018convergence}. In the following, we give some concrete examples:
\begin{itemize}
\item Consider a one-dimensional sparse setting where non-zero gradient appears with probability $c/T$ and $c>1$ is a constant. Then ${\rm E}[\log(\sum_{t=1}^Tg_{t,1}^2)]\leq O(\log(c))$, which is a constant factor.
\item  Consider a high-dimensional sparse setting where in each dimension of gradient non-zero element appears with probability $p=T^{(m-d)/d}$ with $m \in [1,d)$ being a constant. Then, ${\rm E}[\sum_{i=1}^d \log(\sum_{j=1}^Tg_{j,i}^2)] \leq O(m\log T)$, which is much tighter than $O(d\log T)$.
\end{itemize}
\end{rem}
Next, we provide an instantiation of $\{\beta_{2t}\}_{t=1}^T$ such that Conditions \ref{condi1} and \ref{condi2} hold, and derive the following Corollary.
\begin{cor}
\label{cor:thm1}
Suppose Assumptions \ref{asu:osco} and \ref{asu:def} hold, and all loss functions $f_1(\cdot),\dots, f_T(\cdot)$ are $\lambda$-strongly convex. Let  $\delta>0$, $\beta_{1t}=\beta_1\nu^{t-1}$ where $\nu, \beta_1\in[0,1)$, and $1-\frac{1}{t}\leq\beta_{2t}\leq1-\frac{\gamma}{t}$, where $\gamma\in(0,1]$. Then we have:\\
1. For any $\alpha\geq \frac{(2-\gamma)G^2_{\infty}}{\lambda(1-\beta_1)}$, $\forall t\in[T]$ and $i\in[d]$,
 $$\frac{t{v}_{t,i}}{\alpha}-\frac{(t-1){v}_{t-1,i}}{\alpha}\leq \lambda(1-\beta_1).$$
2. For all $t\in[T]$ and $j\in[d]$,
$$t\sum_{j=1}^t{\prod_{k=1}^{t-j}\beta_{2(t-k+1)}(1-\beta_{2j})g^2_{j,i}}\geq \gamma \sum_{j=1}^tg^2_{j,i}.$$
Moreover, let $\alpha\geq\frac{(2-\gamma)G^2_{\infty}}{\lambda(1-\beta_1)}$, and the regret of SAdam satisfies:
\begin{equation*}
\begin{split}
R(T)\leq& \frac{dD_{\infty}^2\delta}{2\alpha(1-\beta_1)}+\frac{d\beta_1D_{\infty}^2(G_{\infty}^2+\delta)}{2\alpha(1-\beta_1)(\nu-1)^2}+\frac{\alpha}{\gamma(1-\beta_1)^3}\sum_{i=1}^d\log\left(\frac{\gamma}{\delta}\sum_{j=1}^Tg^2_{j,i}+1\right).
\end{split}
\end{equation*}
\end{cor}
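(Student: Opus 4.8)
The plan is to prove Corollary \ref{cor:thm1} by verifying that the given choice of $\beta_{2t}$ satisfies Conditions \ref{condi1} and \ref{condi2}, so that Theorem \ref{th1} applies with an explicit value of $\zeta$; the final regret bound then follows by substitution. I would organize the proof into three stages: establishing part 1, establishing part 2 (which identifies $\zeta = 1/\gamma$), and then plugging into \eqref{th1eq}.

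\textbf{Part 1.} I would start from the definition $v_{t,i}=\beta_{2t}v_{t-1,i}+(1-\beta_{2t})g_{t,i}^2$ in Step 6, and examine the quantity $t v_{t,i}-(t-1)v_{t-1,i}$. Substituting the recursion gives
\begin{equation*}
t v_{t,i}-(t-1)v_{t-1,i} = \bigl(t\beta_{2t}-(t-1)\bigr)v_{t-1,i} + t(1-\beta_{2t})g_{t,i}^2.
\end{equation*}
The constraint $1-\frac{1}{t}\le\beta_{2t}\le 1-\frac{\gamma}{t}$ is exactly what controls each term: the lower bound $\beta_{2t}\ge 1-\frac1t$ gives $t\beta_{2t}-(t-1)\ge 0$, and the upper bound gives $t\beta_{2t}-(t-1)\le 1-\gamma$ and $t(1-\beta_{2t})\le 2-\gamma$. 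Using $v_{t-1,i}\le G_\infty^2$ (which follows from Assumption \ref{asu:osco} since $v_{t-1,i}$ is a convex combination of past squared gradients) and $g_{t,i}^2\le G_\infty^2$, I would bound the whole expression by $(1-\gamma)G_\infty^2+(2-\gamma)G_\infty^2$ or, more carefully, by $(2-\gamma)G_\infty^2$ after combining the coefficients. Dividing by $\alpha$ and imposing $\alpha\ge\frac{(2-\gamma)G_\infty^2}{\lambda(1-\beta_1)}$ yields the desired $\le\lambda(1-\beta_1)$. Thus Condition \ref{condi1} holds with $C=(2-\gamma)G_\infty^2/(1-\beta_1)$.

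\textbf{Part 2.} Here I would unfold the EMA recursion explicitly to write $v_{t,i}=\sum_{j=1}^t\bigl(\prod_{k=1}^{t-j}\beta_{2(t-k+1)}\bigr)(1-\beta_{2j})g_{j,i}^2$, so the left-hand side of \eqref{ineq:thmlowerbound} is exactly $t\,v_{t,i}$. The key is a per-term lower bound on the product-of-weights: for the weight attached to $g_{j,i}^2$, the factor $(1-\beta_{2j})\ge\frac{\gamma}{j}$ and each $\beta_{2(t-k+1)}\ge 1-\frac{1}{t-k+1}=\frac{t-k}{t-k+1}$, so the telescoping product $\prod_{k=1}^{t-j}\frac{t-k}{t-k+1}=\frac{j}{t}$. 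Multiplying, the weight on $g_{j,i}^2$ is at least $\frac{j}{t}\cdot\frac{\gamma}{j}=\frac{\gamma}{t}$, and hence $t\,v_{t,i}\ge\gamma\sum_{j=1}^t g_{j,i}^2$, which is precisely part 2 and shows Condition \ref{condi2} holds with $\zeta=1/\gamma$.

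\textbf{Conclusion.} With Conditions \ref{condi1} and \ref{condi2} verified and $\zeta=1/\gamma$, I would invoke Theorem \ref{th1} and substitute $\zeta=1/\gamma$ into \eqref{th1eq}: the third term becomes $\frac{\alpha(1/\gamma)}{(1-\beta_1)^3}\sum_{i=1}^d\log\bigl(\frac{\gamma}{\delta}\sum_{j=1}^T g_{j,i}^2+1\bigr)$, matching the stated bound, while the first two terms are unchanged. The main obstacle I anticipate is the telescoping argument in Part 2, specifically keeping the index bookkeeping in the product $\prod_{k=1}^{t-j}\beta_{2(t-k+1)}$ straight so that the telescoping collapses cleanly to $j/t$; the bound $v_{t-1,i}\le G_\infty^2$ in Part 1 also needs to be justified carefully as an invariant of the EMA recursion rather than assumed. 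Everything else is routine substitution.
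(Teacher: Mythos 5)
Your proposal is correct and follows essentially the same route as the paper: the same recursion-based bound $(2-\gamma)G_\infty^2$ for Condition \ref{condi1}, the same telescoping product $\prod_{k=1}^{t-j}\frac{t-k}{t-k+1}=\frac{j}{t}$ for Condition \ref{condi2} with $\zeta=1/\gamma$, and direct substitution into Theorem \ref{th1}. The only nit is in Part 1: the bound $t(1-\beta_{2t})\leq 1\leq 2-\gamma$ comes from the \emph{lower} bound $\beta_{2t}\geq 1-\frac{1}{t}$, not the upper bound, and it is this value $1$ (not $2-\gamma$) that you must combine with $(1-\gamma)G_\infty^2$ to land on $(2-\gamma)G_\infty^2$ as the paper does.
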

Furthermore, as a special case, by setting $\beta_{1t}=0$ and $1-\frac{1}{t}\leq\beta_{2t}\leq1-\frac{\gamma}{t}$, our algorithm reduces to  SC-RMSprop  \citep{mukkamala2017variants}, which is a variant of RMSprop for strongly convex functions.
Although \citeauthor{mukkamala2017variants} (\citeyear{mukkamala2017variants}) have provided theoretical guarantees for this algorithm, we note that their regret bound is in fact \emph{data-independent}. Specifically, the  regret bound provided by \citeauthor{mukkamala2017variants} (\citeyear{mukkamala2017variants}) takes the following form:
\begin{equation}
\begin{split}
\label{bat}
R(T)\leq &\frac{\delta d D^2_{\infty}}{2\alpha}+\frac{\alpha}{2\gamma}\sum_{i=1}^d\log\left(\frac{Tv_{T,i}}{\delta}+1\right)+\frac{\alpha}{\gamma}\sum_{i=1}^d\frac{(1-\gamma)(1+\log T)}{\inf_{j\in[1,T]}jv_{j,i}+\delta}.
\end{split}
\end{equation}
Focusing on the denominator of the last term in \eqref{bat}, we have
$$\inf_{j\in[1,T]}jv_{j,i}+\delta\leq 1v_{1,i}+\delta\leq G_{\infty}+\delta$$
thus
$$\frac{\alpha}{\gamma}\sum_{i=1}^d\frac{(1-\gamma)(1+\log T)}{\inf_{j\in[1,T]}jv_{j,i}+\delta}\geq \frac{d\alpha}{\gamma}\frac{(1-\gamma)(\log T+1)}{G_{\infty}+\delta}$$
which implies that their regret bound is of order $O(d\log T)$, and thus  data-independent. In contrast, based on Corollary \ref{cor:thm1}, we present a new regret bound for SC-RMSprop in the following, which is $O(\sum_{i=1}^d\log(\|\textbf{g}_{1:T,i}\|^2))$, and thus data-dependent.
\begin{cor}
\label{cor:thm3}
Suppose Assumptions \ref{asu:osco} and \ref{asu:def} hold, and all loss functions $f_1(\cdot),\dots, f_T(\cdot)$ are $\lambda$-strongly convex. Let  $\delta>0$, $\beta_{1t}=0$, and $1-\frac{1}{t}\leq\beta_{2t}\leq1-\frac{\gamma}{t}$, where $\gamma\in(0,1]$. Let $\alpha\geq\frac{(2-\gamma)G^2_{\infty}}{\lambda}$. Then SAdam reduces to SC-RMSprop, and its regret satisfies
\label{cor:regret:OSCO-AMSGRAD}
\begin{equation}
\begin{split}
\label{c3}
R(T)\leq& \frac{dD_{\infty}^2\delta}{2\alpha}+\frac{\alpha}{\gamma}\sum_{i=1}^d\log\left(\frac{\gamma}{\delta}\sum_{j=1}^Tg^2_{j,i}+1\right).
\end{split}
\end{equation}
\end{cor}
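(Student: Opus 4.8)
The plan is to obtain Corollary \ref{cor:thm3} directly as the $\beta_1 = 0$ specialization of Corollary \ref{cor:thm1}, so that essentially all of the analytic work is already in hand and the argument reduces to an algorithmic reduction, a hypothesis check, and a substitution. Since the proposed parameter range $1-\frac{1}{t}\leq\beta_{2t}\leq1-\frac{\gamma}{t}$ is exactly the one treated in Corollary \ref{cor:thm1}, I expect no new inequalities to be needed.

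First I would confirm the algorithmic reduction. Setting $\beta_{1t}=0$ for all $t$ collapses Step 5 of Algorithm \ref{SC-Adam} to $\hat{\textbf{g}}_t=\textbf{g}_t$, so Step 8 becomes $\textbf{x}_{t+1}=\Pi_{\mathcal{D}}^{\hat{V}_t}(\textbf{x}_t-\frac{\alpha}{t}\hat{V}_t^{-1}\textbf{g}_t)$, while Steps 6 and 7 are unchanged. This is precisely the SC-RMSprop update of \citeauthor{mukkamala2017variants} (\citeyear{mukkamala2017variants}), establishing the claimed reduction. Because $\beta_{1t}=\beta_1\nu^{t-1}$, requiring $\beta_{1t}=0$ forces $\beta_1=0$, which is the value I will substitute throughout.

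Next I would verify that the hypotheses of Corollary \ref{cor:thm1} hold here. Parts 1 and 2 of that Corollary already certify that the chosen $\{\beta_{2t}\}$ satisfies Conditions \ref{condi1} and \ref{condi2} (with $\zeta=1/\gamma$ in Condition \ref{condi2}), and this verification of Condition \ref{condi2} does not involve $\beta_1$ at all. Plugging $\beta_1=0$ into part 1, the step-size requirement $\alpha\geq\frac{(2-\gamma)G_\infty^2}{\lambda(1-\beta_1)}$ becomes exactly $\alpha\geq\frac{(2-\gamma)G_\infty^2}{\lambda}$, matching the hypothesis of Corollary \ref{cor:thm3}, and the conclusion of part 1 reduces to $\frac{tv_{t,i}}{\alpha}-\frac{(t-1)v_{t-1,i}}{\alpha}\leq\lambda$, i.e. Condition \ref{condi1} at $\beta_1=0$.

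Finally I would substitute $\beta_1=0$ into the regret bound of Corollary \ref{cor:thm1} and simplify its three terms: the factor $(1-\beta_1)$ in the first denominator becomes $1$, yielding $\frac{dD_\infty^2\delta}{2\alpha}$; the second term carries $\beta_1$ in its numerator and so vanishes identically; and $(1-\beta_1)^3=1$ leaves $\frac{\alpha}{\gamma}\sum_{i=1}^d\log(\frac{\gamma}{\delta}\sum_{j=1}^Tg_{j,i}^2+1)$. Summing gives exactly \eqref{c3}. There is no real obstacle beyond bookkeeping; the only point needing care is ruling out any hidden dependence on $\beta_1$ in the verification of Condition \ref{condi1}, which the explicit statement of Corollary \ref{cor:thm1} already handles by making the dependence entirely through the factor $(1-\beta_1)$ that we are zeroing out consistently.
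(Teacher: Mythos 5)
Your proposal is correct and matches the paper's own route exactly: the paper derives Corollary \ref{cor:thm3} precisely by specializing Corollary \ref{cor:thm1} to $\beta_1=0$ (which kills the middle term and turns every $(1-\beta_1)$ factor into $1$), after noting that $\beta_{1t}=0$ collapses the momentum step so that SAdam becomes SC-RMSprop. Your additional care in checking that the verification of Conditions \ref{condi1} and \ref{condi2} survives the substitution is sound but introduces nothing beyond what the paper already relies on.
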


Finally, we note that \citeauthor{mukkamala2017variants} (\citeyear{mukkamala2017variants}) also consider a more general version of SC-RMSprop which uses a time-variant non-increasing $\delta$ for each dimension $i$. In  Appendix \ref{appd} we introduce the $\delta$-variant technique to our SAdam, and provide corresponding theoretical guarantees.
\section{Experiments}
\begin{figure*}[t]
        \begin{subfigure}[b]{0.34\textwidth}
        \includegraphics[width=\textwidth]{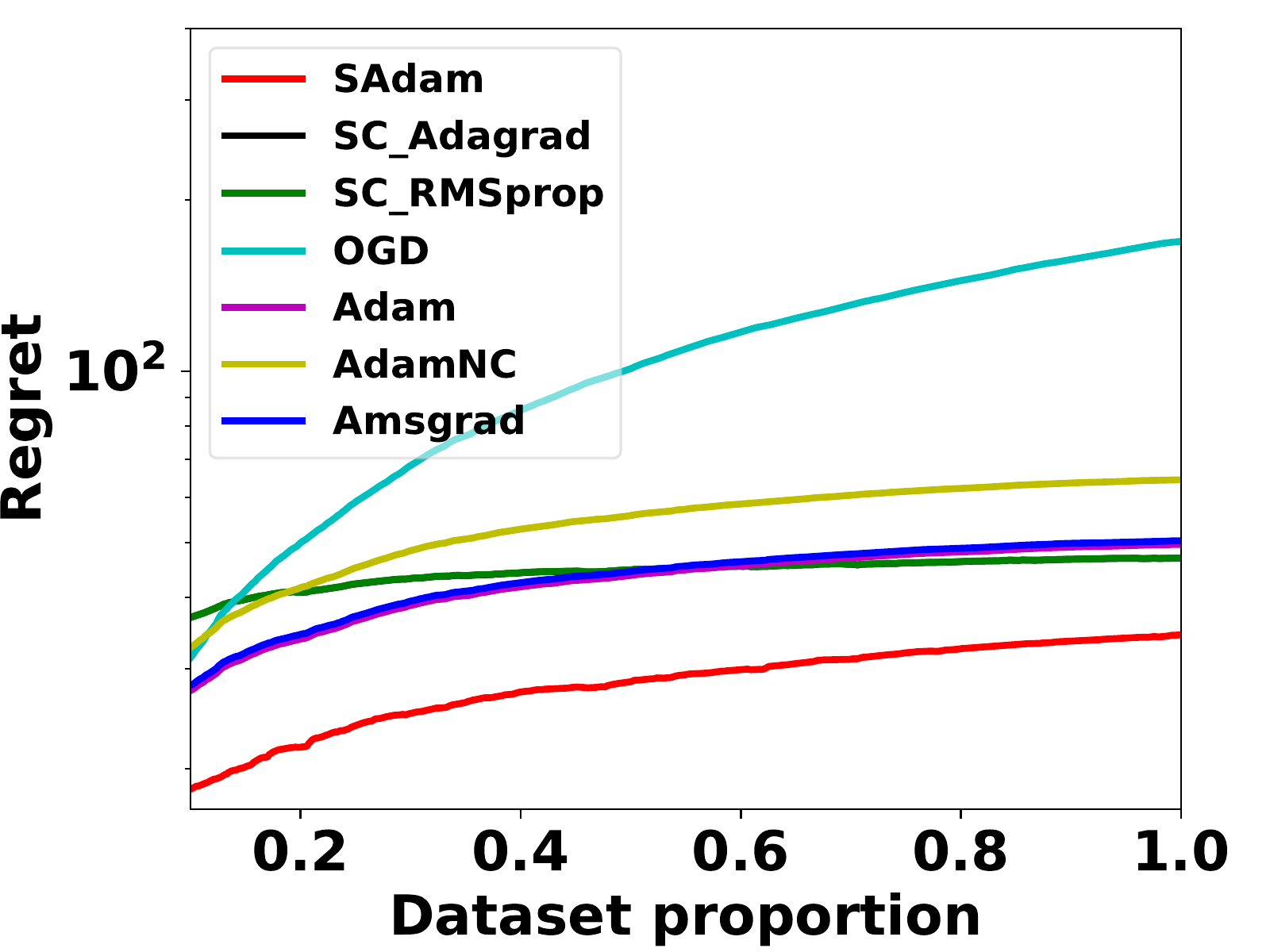}
        \caption{MNIST}
        \label{fig:MNISTCNNA}
    \end{subfigure}
        \hspace{-.13in}
    \begin{subfigure}[b]{0.34\textwidth}
        \includegraphics[width=\textwidth]{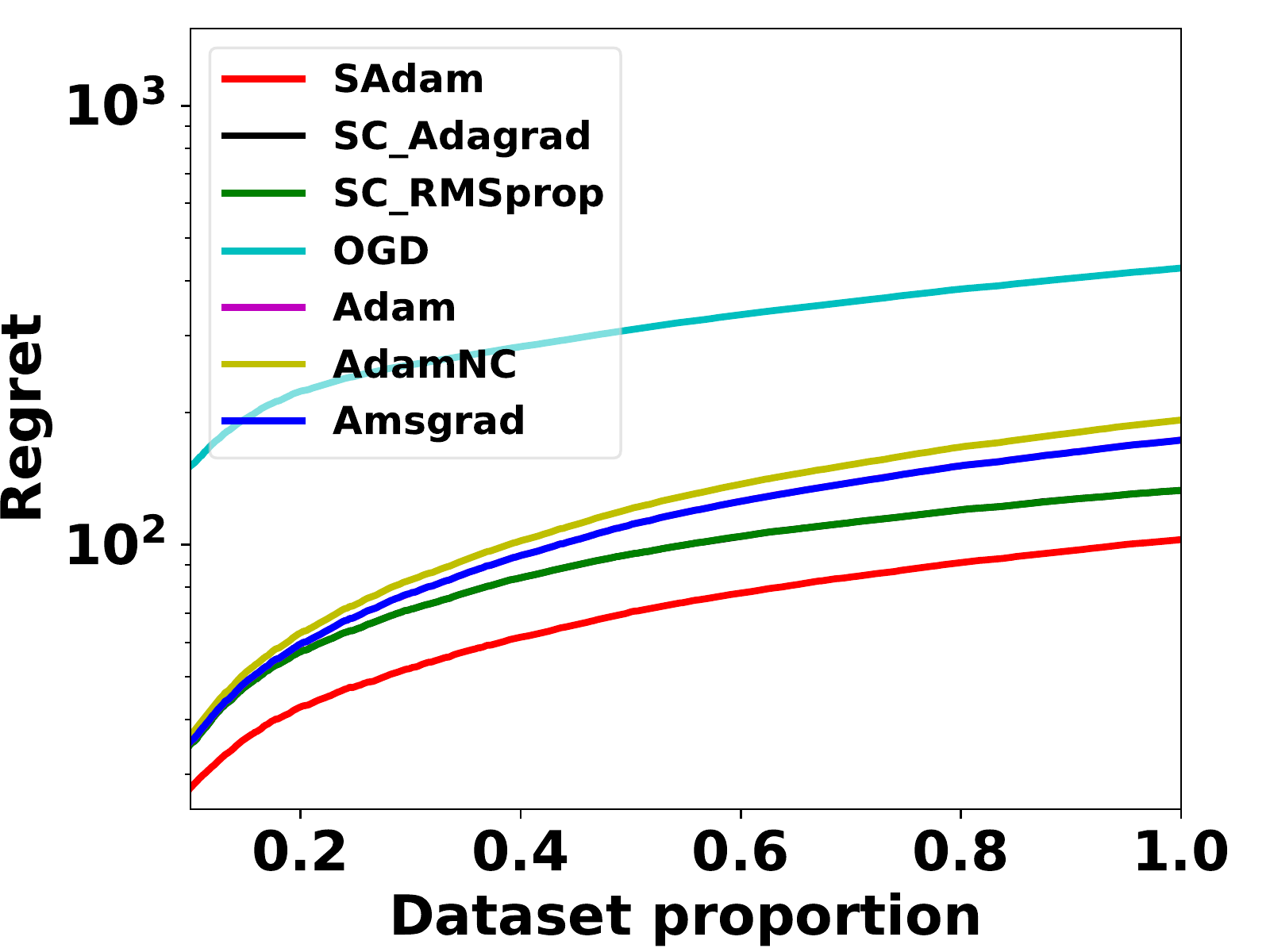}
        \caption{CIFAR10}
        \label{fig:CIFAR10CNNL}
    \end{subfigure}
        \hspace{-.16in}
    ~ 
    \begin{subfigure}[b]{0.34\textwidth}
        \includegraphics[width=\textwidth]{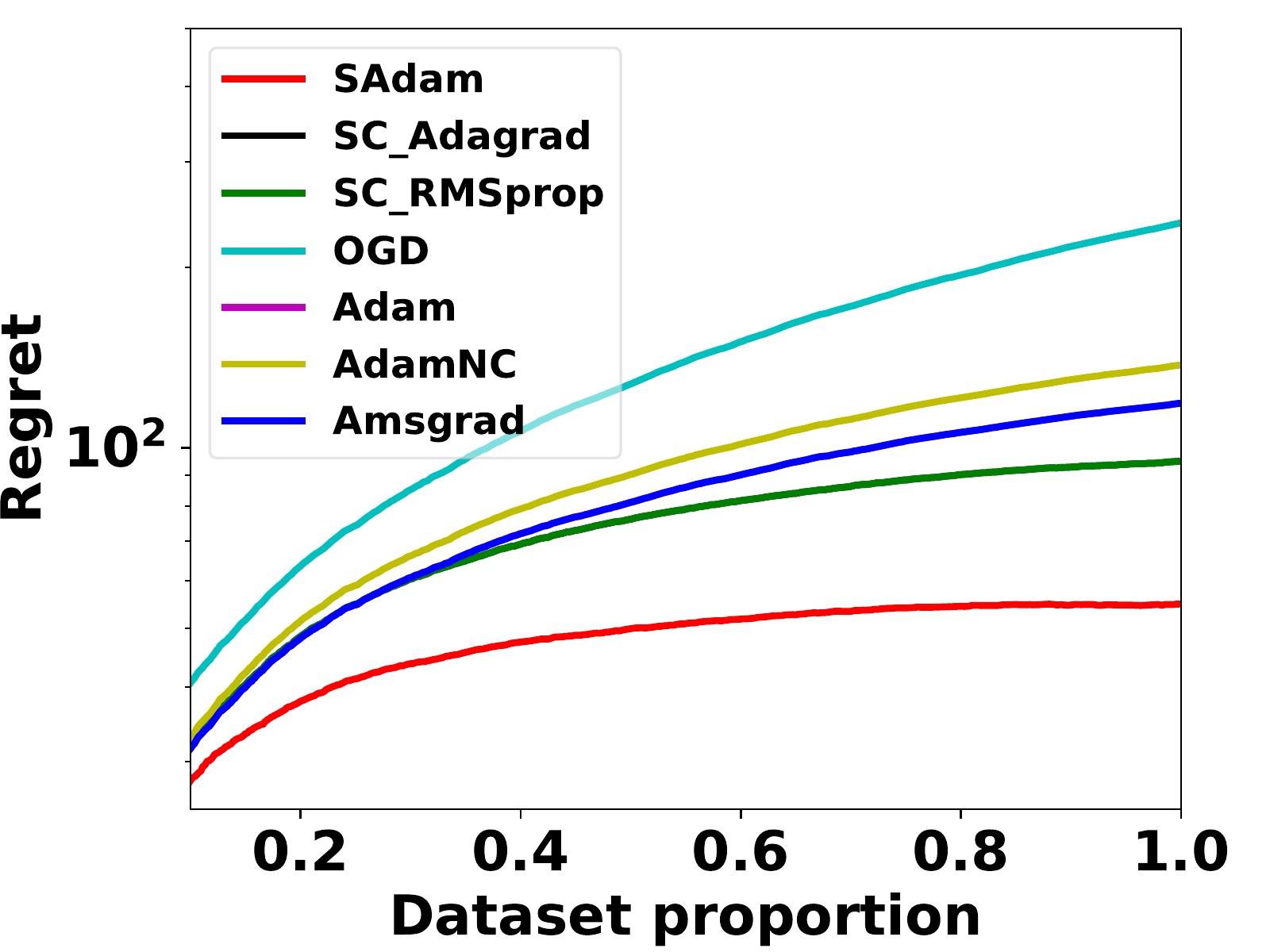}
        \caption{CIFAR100}
        \label{fig:CIFAR100CNNL}
    \end{subfigure}
\label{f111}
    \caption{Regret v.s. data proportion  for $\ell_2$-regularized softmax regression}\label{fig1}
\end{figure*}
In this section, we present empirical results on optimizing strongly convex functions and training deep networks.

{\bf Algorithms.} In both experiments, we compare the following algorithms:
\begin{itemize}[noitemsep,nolistsep]
\item SC-Adagrad \citep{mukkamala2017variants}, with step size $\alpha_t=\alpha/t$.
\item SC-RMSprop \citep{mukkamala2017variants}, with step size $\alpha_t={\alpha}/{t}$ and $\beta_{t}=1-\frac{0.9}{t}$.
\item Adam \citep{kingma2014adam} and AMSgrad \citep{reddi2018convergence}, both with $\beta_1=0.9$, $\beta_2=0.999$, $\alpha_t={\alpha}/{\sqrt{t}}$ for convex problems and time-invariant $\alpha_t=\alpha$ for non-convex problems.
\item AdamNC \citep{reddi2018convergence}, with $\beta_1=0.9$, $\beta_{2t}=1-{1}/{t}$, and $\alpha_t={\alpha}/{\sqrt{t}}$ for convex problems and a time-invariant $\alpha_t=\alpha$ for non-convex problems.
\item Online Gradient Descent (OGD), with step size $\alpha_t={\alpha}/{t}$ for strongly convex problems and a time-invariant $\alpha_t=\alpha$ for non-convex problems.
\item Our proposed SAdam, with $\beta_1=0.9$, $\beta_{2t}=1-\frac{0.9}{t}$.
\end{itemize}
For Adam, AdamNC and AMSgrad, we choose $\delta=10^{-8}$ according to the recommendations in their papers. For SC-Adagrad and SC-RMSprop, following \citeauthor{mukkamala2017variants} (\citeyear{mukkamala2017variants}), we choose a time-variant $\delta_{t,i}=\xi_2e^{-\xi_1tv_{t,i}}$ for each dimension $i$, with $\xi_1=0.1$, $\xi_2=1$ for convex problems and $\xi_1=0.1$, $\xi_2=0.1$ for non-convex problems. For our SAdam, since the removing of the square root procedure and very small gradients may cause too large step sizes in the beginning iterations, we use a rather large $\delta=10^{-2}$ to avoid this problem.  To conduct a fair comparison, for each algorithm, we choose $\alpha$ from the set $\{0.1,0.01,0.001, 0.0001\}$ and report the best results.

{\bf Datasets.} In both experiments, we examine the performances of the aforementioned algorithms on three widely used datasets: MNIST (60000 training samples, 10000 test samples), CIFAR10 (50000 training samples, 10000 test samples), and CIFAR100 (50000 training samples, 10000 test samples). We refer to  \citeauthor{lecun1998mnist} (\citeyear{lecun1998mnist}) and \citeauthor{krizhevsky2009learning} (\citeyear{krizhevsky2009learning}) for more details of the three datasets.
\subsection{Optimizing Strongly Convex Functions}
In the first experiment, we consider the problem of mini-batch $\ell_2$-regularized softmax regression, which belongs to the online strongly convex optimization framework. Let $K$ be the number of classes and $m$ be the batch size. In each round $t$, firstly a mini-batch of training samples $\{(\textbf{x}_m,y_m)\}_{i=1}^m$ arrives, where $y_i\in[K], \forall i\in[m]$. Then, the algorithm predicts parameter vectors $\{\textbf{w}_i,{b}_i\}_{i=1}^K$, and suffers a loss  which takes the following form:
\begin{equation*}
\begin{split}
J(\textbf{w},b)=&-\frac{1}{m}\sum_{i=1}^m\log\left(\frac{e^{\textbf{w}^{\top}_{y_i}\textbf{x}_i+b_{y_i}}}{\sum_{j=1}^Ke^{\textbf{w}_j^{\top}\textbf{x}_i+b_j}}\right)+\lambda_1 \sum_{k=1}^K\|\textbf{w}_k\|^2+\lambda_2\sum_{k=1}^Kb_k^2.
\end{split}
\end{equation*}
The value of $\lambda_1$ and $\lambda_2$ are set to be $10^{-2}$ for all experiments.  The regret (in log scale) v.s. dataset proportion is shown in Figure \ref{fig1}. It can be seen that our SAdam outperforms other methods across all the considered datasets. Besides, we observe that data-dependant strongly convex methods such as SC-Adagrad, SC-RMSprop and SAdam preform better than algorithms for general convex functions such as Adam, AMSgrad and AdamNC. Finally, OGD has the overall highest regret on all three datasets.
\subsection{Training Deep Networks}
\begin{figure*}[t]
        \begin{subfigure}[b]{0.33\textwidth}
        \includegraphics[width=\textwidth]{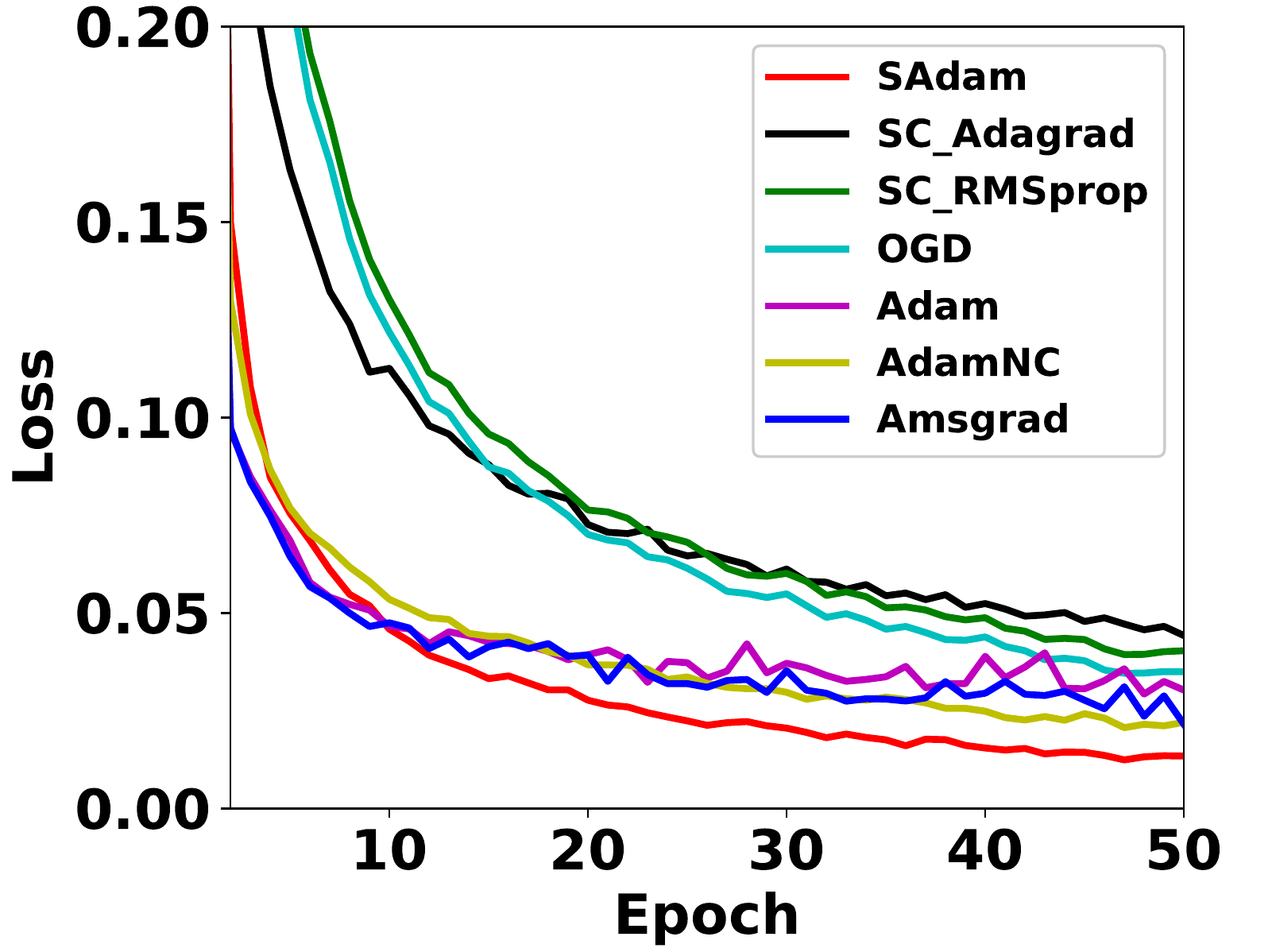}
        \caption{MNIST}
        \label{fig:MNISTCNNA}
\end{subfigure}
\hspace{-.15in}
    \begin{subfigure}[b]{0.33\textwidth}
        \includegraphics[width=\textwidth]{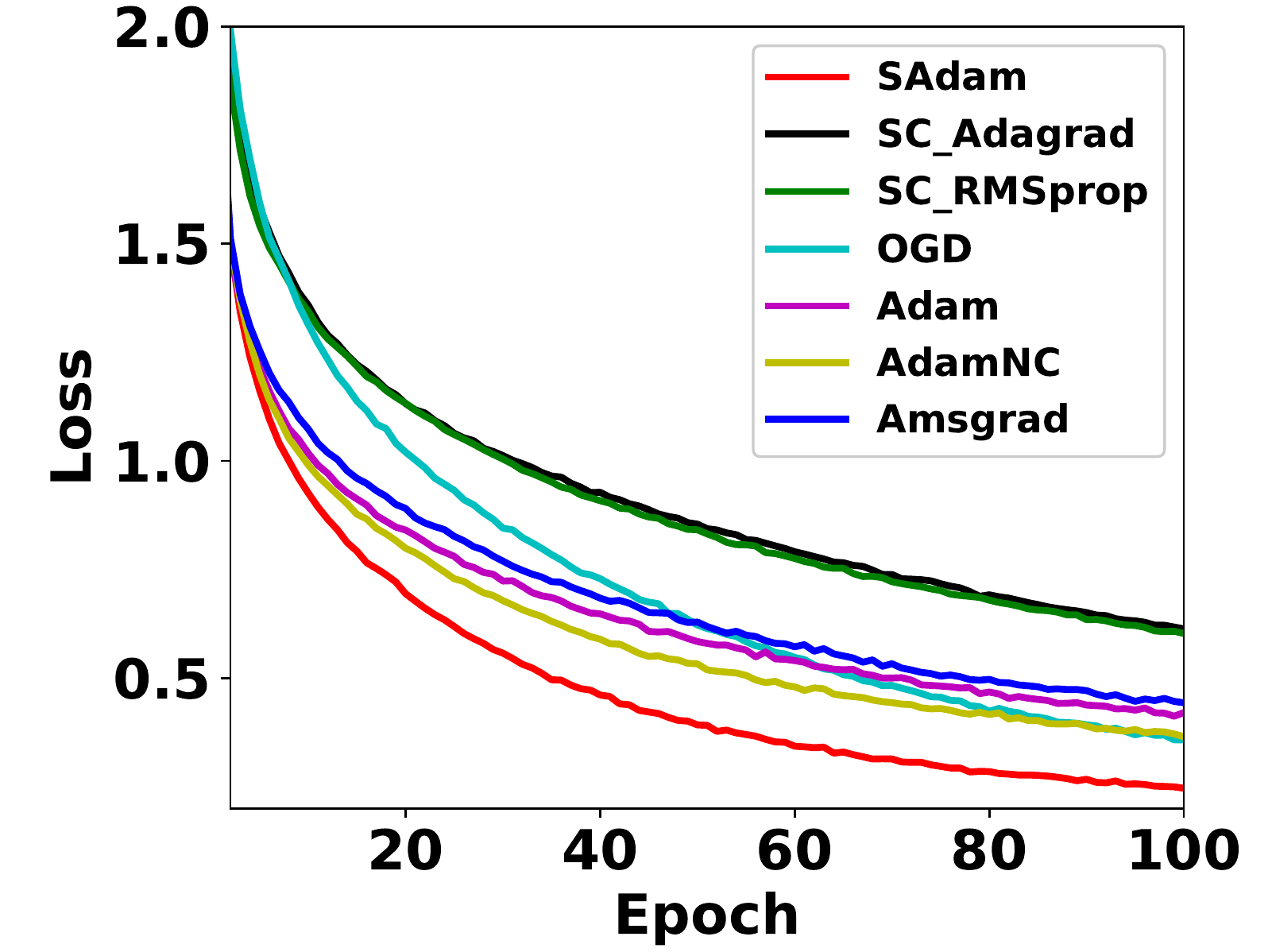}
        \caption{CIFAR10}
        \label{fig:CIFAR10CNNL}
    \end{subfigure}
    \hspace{-.13in}
    ~ 
    \begin{subfigure}[b]{0.33\textwidth}
        \includegraphics[width=\textwidth]{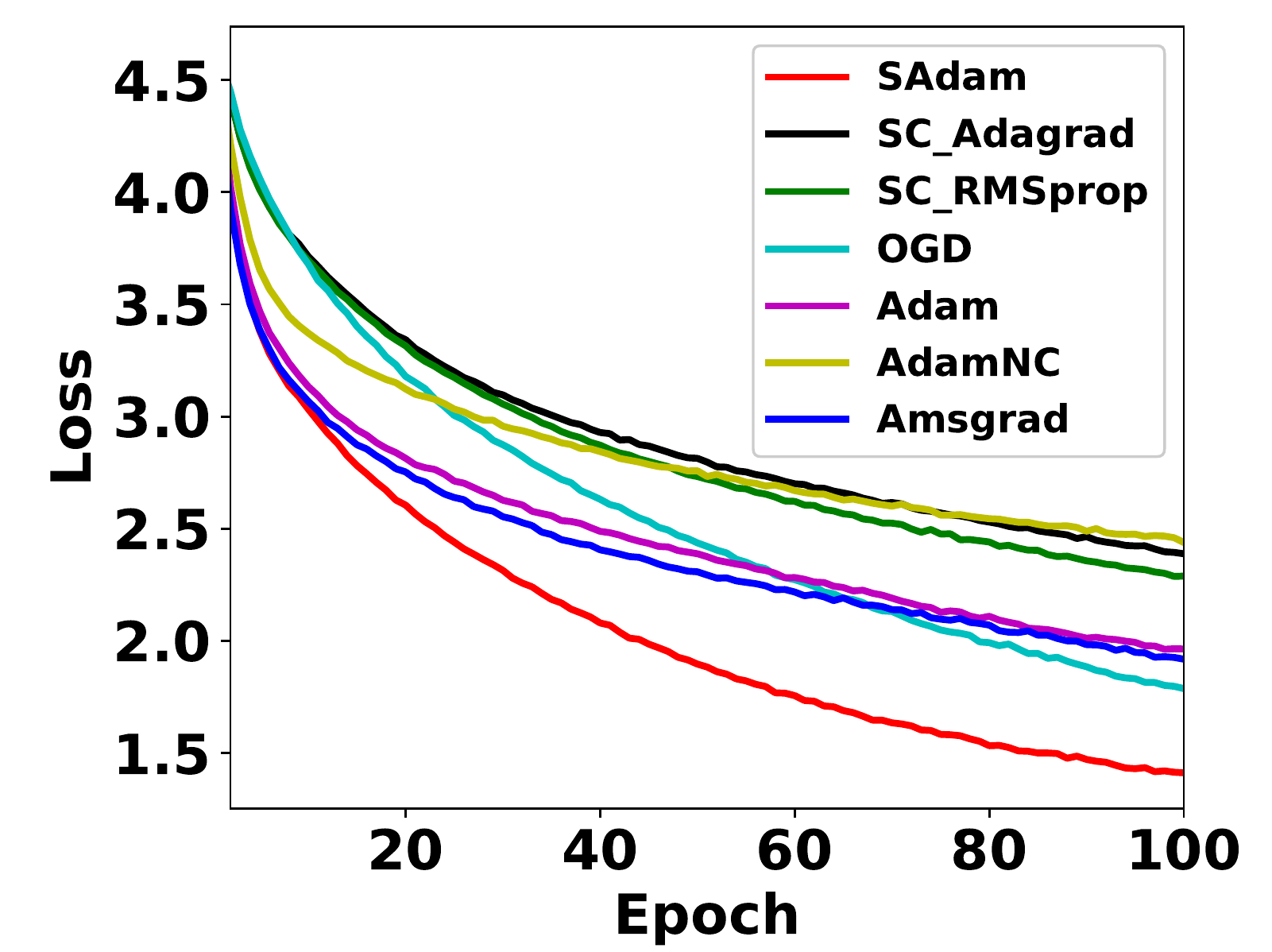}
        \caption{CIFAR100}
        \label{fig:CIFAR100CNNL}
    \end{subfigure}

    \caption{Training loss v.s. number of epochs for 4-layer CNN}\label{fig3}
    \vspace{-.12in}
\end{figure*}
\begin{figure*}
        \begin{subfigure}[b]{0.33\textwidth}
        \includegraphics[width=\textwidth]{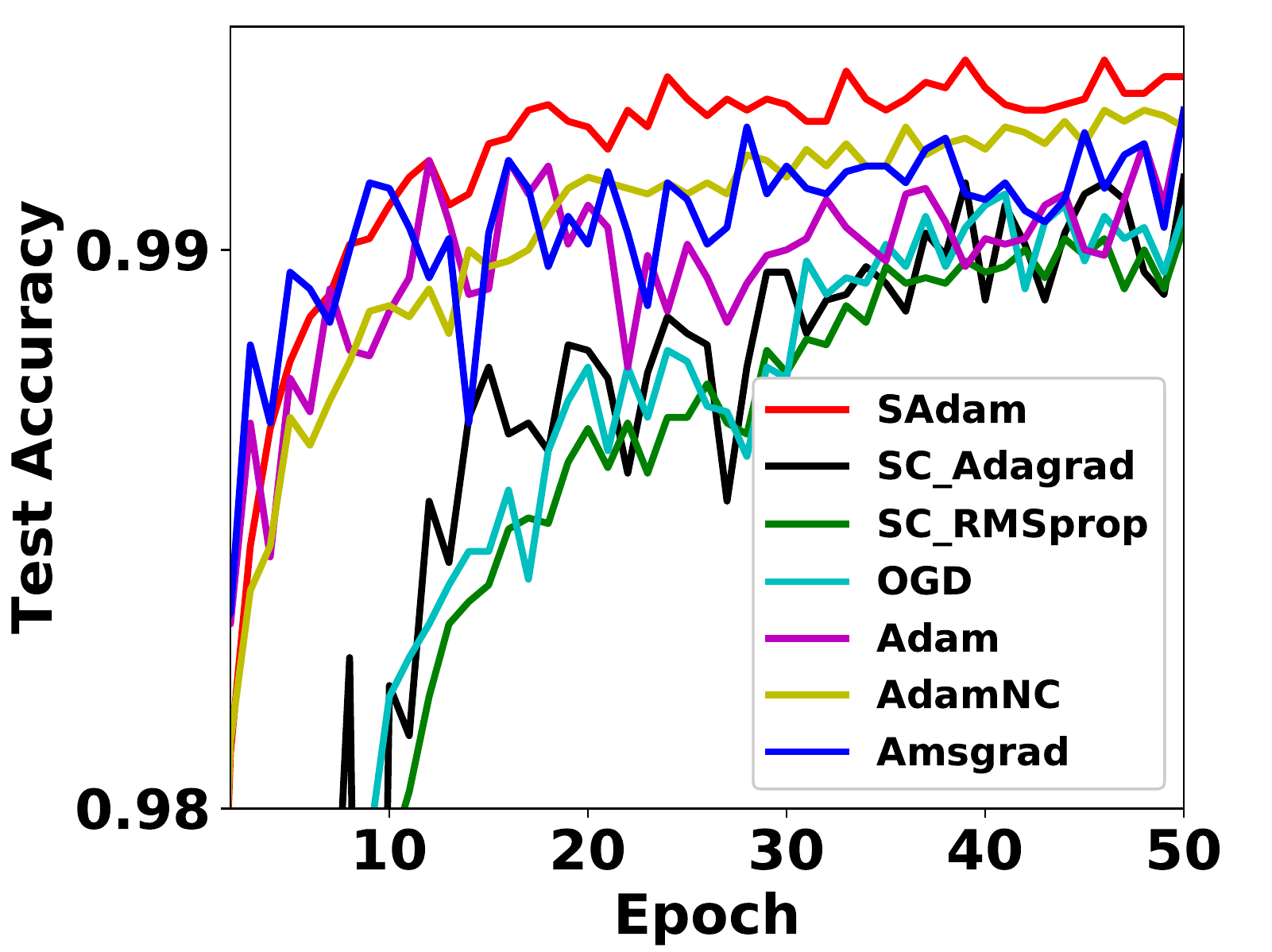}
        \caption{MNIST}
        \label{fig:MNISTCNNA}
    \end{subfigure}
    \hspace{-.2in}
    ~ 
    \begin{subfigure}[b]{0.33\textwidth}
        \includegraphics[width=\textwidth]{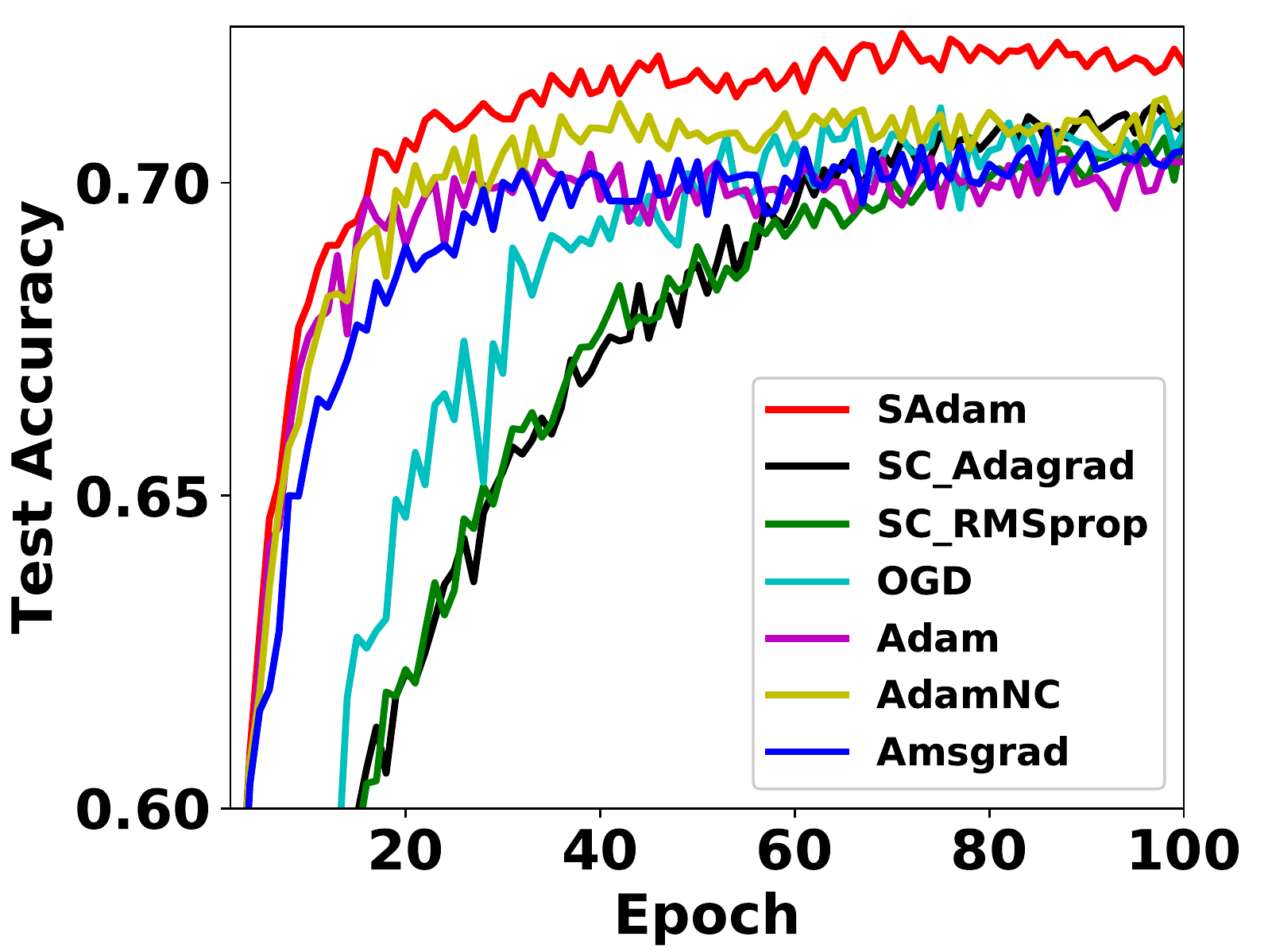}
        \caption{CIFAR10}
        \label{fig:CIFAR10CNNA}
    \end{subfigure}
    \hspace{-.13in}
    ~ 
    \begin{subfigure}[b]{0.33\textwidth}
        \includegraphics[width=\textwidth]{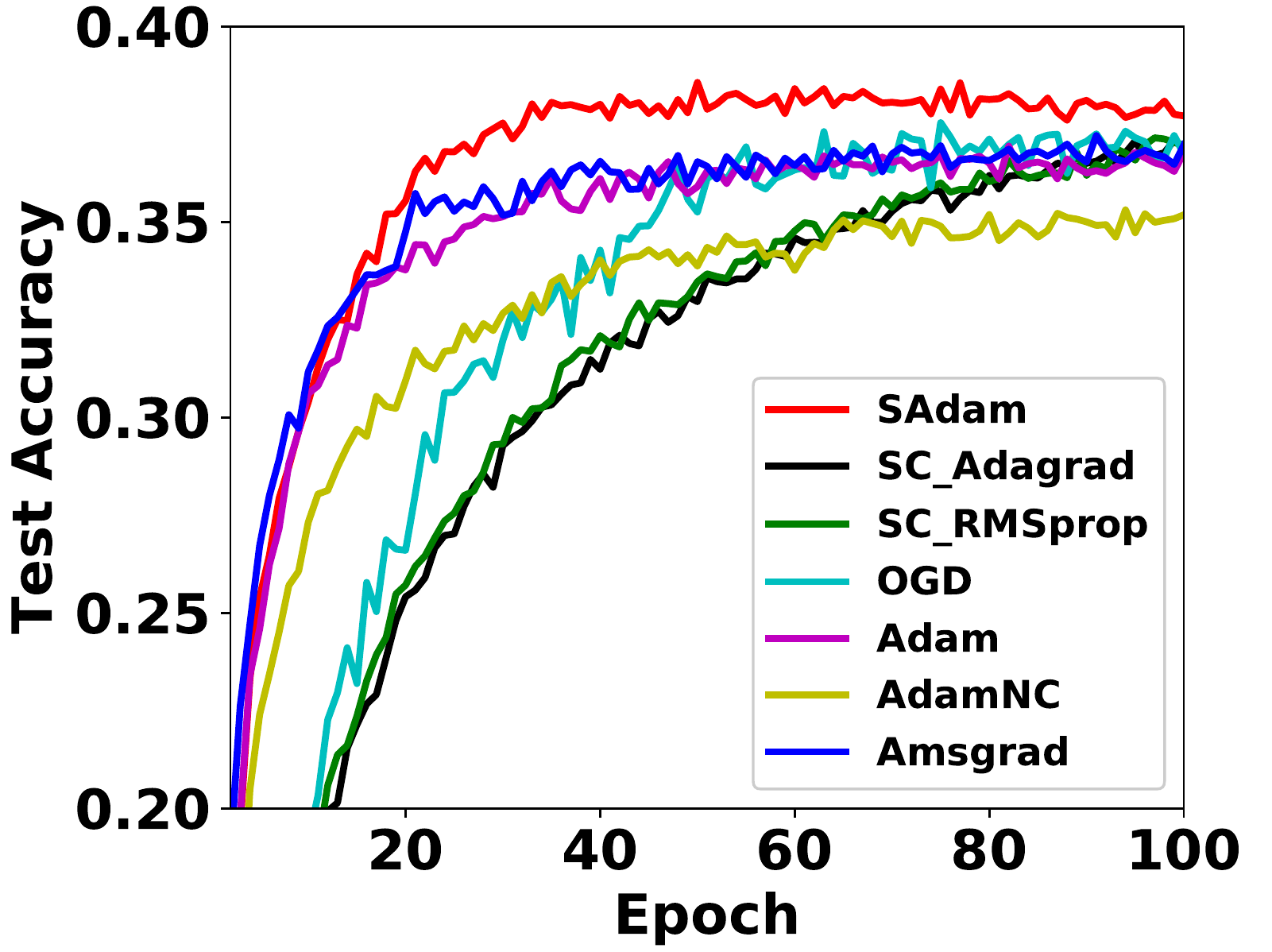}
        \caption{CIFAR100}
        \label{fig:CIFAR100CNNA}
    \end{subfigure}
    \caption{Test accuracy v.s. number of epochs for 4-layer CNN}\label{fig2}
    \vspace{-.12in}
\end{figure*}
Following \citeauthor{mukkamala2017variants} (\citeyear{mukkamala2017variants}), we also conduct experiments on a 4-layer CNN, which consists of two convolutional layers (each with 32 filters of size 3 $\times$ 3), one max-pooling layer (with a 2 $\times$ 2 window and 0.25 dropout), and one fully connected layer (with 128 hidden units and 0.5 dropout). We employ ReLU function as the activation function for convolutional layers and softmax function as the activation function for the fully connected layer. The loss function is the cross-entropy. The training loss v.s. epoch is shown in Figure \ref{fig3}, and the test accuracy v.s. epoch is presented in Figure \ref{fig2}.  As can be seen,  our SAdam achieves the lowest training loss on the three data sets. Moreover, this performance gain also translates into good performance on test accuracy. The experimental results show that although our proposed SAdam is  designed for strongly convex functions, it could lead to superior practical performance even in some highly non-convex cases such as deep learning tasks.
\section{Conclusion and Future Work}
In this paper, we provide a variant of Adam adapted to strongly convex functions. The proposed algorithm, namely SAdam, follows the general framework of Adam, while keeping a step size decaying in general on the order of $O(1/t)$ and controlled by data-dependant heperparameters in order to exploit strong convexity. Our theoretical analysis shows that SAdam achieves a data-dependant $O(d\log T)$ regret bound for strongly convex functions, which means that it converges much faster than Adam, AdamNC, and AMSgrad in such cases, and can also enjoy a huge gain in the face of sparse gradients.   In addition, we also provide the first data-dependant logarithmic regret bound for SC-RMSprop algorithm. Finally, we test the proposed algorithm on optimizing strongly convex functions as well as training deep networks, and the empirical results demonstrate the effectiveness of our method.

Since SAdam enjoys a data-dependent $O(d\log T)$ regret for online strongly convex optimization, it can be easily translated into a data-dependent $O(d\log T/T)$ convergence rate for \emph{stochastic} strongly convex optimization (SSCO) by using the online-to-batch conversion \citep{kakade2009generalization}. However, this rate is not optimal for SSCO, and it is sill an open problem how to achieve a {data-dependent} $O(d/T)$ convergence rate for SSCO. Recent development on adaptive gradient method \citep{zaiyi2018sadagrad} has proved that Adagrad combined with multi-stage scheme \citep{hazan2014beyond} can achieve this rate, but it is highly non-trivial to extend this technique to SAdam, and we leave it as a future work.
\bibliography{nips2019}
\bibliographystyle{icml2018}
\newpage
\appendix
\onecolumn
\title{Supplementary Material \\ SAdam: Variant of Adam for Strongly Convex Functions}
\vskip 0.3in
\section{Proof of Theorem \ref{th1}}
From Definition \ref{def:strongly convex}, we can upper bound regret as
\begin{equation}
\begin{split}
\label{ineq:rew}
R(T)&=\sum_{t=1}^Tf_t(\textbf{x}_t)-\sum_{t=1}^Tf_{t}(\textbf{x}_*)\overset{\eqref{def:sc}}\leq\sum_{t=1}^T\textbf{g}^{\top}_t(\textbf{x}_t-\textbf{x}_*)-\frac{\lambda}{2}\|\textbf{x}_t-\textbf{x}_*\|^2
\end{split}
\end{equation}
where $\textbf{x}_*:=\min_{\textbf{x}\in\mathcal{D}}\sum_{t=1}^Tf_t(\textbf{x})$ is the best decision in hindsight. On the other hand, by the update rule of $\textbf{x}_{t+1}$ in Algorithm \ref{SC-Adam}, we have
\begin{equation}
\begin{split}
\label{ineq:th1:1}
\|\textbf{x}_{t+1}-\textbf{x}_*\|_{\hat{V}_t}^2=&\left\|\Pi_{\mathcal{D}}^{\hat{V}_t}\left(\textbf{x}_t-{\alpha_t}{\hat{V}^{-1}_{t}}\hat{\textbf{g}}_t\right)-\textbf{x}_*\right\|^2_{\hat{V}_t}\\
\leq&\|\textbf{x}_t-{\alpha_t}{\hat{V}^{-1}_{t}}\hat{\textbf{g}}_t-\textbf{x}_*\|_{\hat{V}_t}^2\\
=&-2\alpha_t\hat{\textbf{g}}^{\top}_t\left(\textbf{x}_t-\textbf{x}_*\right)+\|\textbf{x}_t-\textbf{x}_*\|^2_{\hat{V}_t}+{\alpha_t^2}||\hat{\textbf{g}}_t||_{\hat{V}^{-1}_t}^2\\
=&-2\alpha_t\left(\beta_{1t}\hat{\textbf{g}}_{t-1}+(1-\beta_{1t})\textbf{g}_t\right)^{\top}\left(\textbf{x}_t-\textbf{x}_*\right)\\
&+\|\textbf{x}_t-\textbf{x}_*\|^2_{\hat{V}_t}+{\alpha_t^2}||\hat{\textbf{g}}_{t}||_{\hat{V}^{-1}_t}^2
\end{split}
\end{equation}
where $\alpha_t:=\alpha/t$, and the inequality is due to the following lemma, which implies that the weighed projection procedure is non-expansive.
\begin{lem}
\label{lem:projection}
\emph{\citep{mcmahan2010adaptive}} Let \emph{$A\in\mathbb{R}^{d\times d}$} be a positive definite matrix and \emph{$\mathcal{D}\subseteq \mathbb{R}^d$} be a convex set. Then we have, \emph{$\forall \textbf{x},\textbf{y}\in\mathbb{R}^d$,
\begin{equation}
\label{eq:pronotexp}
\|\Pi_{\mathcal{D}}^A(\textbf{x})-\Pi_{\mathcal{D}}^A(\textbf{y})\|_{A}\leq \|\textbf{x}-\textbf{y}\|_{A}.
\end{equation}}
\end{lem}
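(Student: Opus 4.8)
The plan is to reduce the statement to the \emph{first-order optimality condition} (a variational inequality) that characterizes the $A$-weighted projection, and then combine two instances of that condition using the Cauchy--Schwarz inequality in the $A$-inner product. Since $A$ is positive definite, $\langle\textbf{u},\textbf{w}\rangle_A:=\textbf{u}^{\top}A\textbf{w}$ is a genuine inner product and $\textbf{y}\mapsto\|\textbf{y}-\textbf{x}\|_A^2$ is strictly convex; together with $\mathcal{D}$ being a closed convex set, as is standard in this setting, this guarantees that $\Pi_{\mathcal{D}}^A(\textbf{x})$ exists and is unique, so the projection is well defined.

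First I would record the optimality condition. Writing $\textbf{p}=\Pi_{\mathcal{D}}^A(\textbf{x})$, the fact that $\textbf{p}$ minimizes $\|\textbf{y}-\textbf{x}\|_A^2$ over the convex set $\mathcal{D}$ gives, upon differentiating the map $s\mapsto\|\textbf{p}+s(\textbf{z}-\textbf{p})-\textbf{x}\|_A^2$ along the feasible segment at $s=0$,
\begin{equation*}
(\textbf{z}-\textbf{p})^{\top}A(\textbf{x}-\textbf{p})\leq 0\quad\text{for all }\textbf{z}\in\mathcal{D}.
\end{equation*}
Next I would set $\textbf{p}=\Pi_{\mathcal{D}}^A(\textbf{x})$ and $\textbf{q}=\Pi_{\mathcal{D}}^A(\textbf{y})$ and apply this twice: taking $\textbf{z}=\textbf{q}$ in the condition for $\textbf{p}$ and $\textbf{z}=\textbf{p}$ in the condition for $\textbf{q}$ yields
\begin{equation*}
(\textbf{q}-\textbf{p})^{\top}A(\textbf{x}-\textbf{p})\leq 0,\qquad(\textbf{p}-\textbf{q})^{\top}A(\textbf{y}-\textbf{q})\leq 0.
\end{equation*}
Adding these and collecting the common factor $\textbf{p}-\textbf{q}$ gives
\begin{equation*}
(\textbf{p}-\textbf{q})^{\top}A(\textbf{p}-\textbf{q})\leq(\textbf{p}-\textbf{q})^{\top}A(\textbf{x}-\textbf{y}),
\end{equation*}
that is, $\|\textbf{p}-\textbf{q}\|_A^2\leq\langle\textbf{p}-\textbf{q},\textbf{x}-\textbf{y}\rangle_A$.

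Finally, Cauchy--Schwarz for $\langle\cdot,\cdot\rangle_A$ bounds the right-hand side by $\|\textbf{p}-\textbf{q}\|_A\,\|\textbf{x}-\textbf{y}\|_A$, so $\|\textbf{p}-\textbf{q}\|_A^2\leq\|\textbf{p}-\textbf{q}\|_A\,\|\textbf{x}-\textbf{y}\|_A$; dividing by $\|\textbf{p}-\textbf{q}\|_A$ (the claim being trivial when this is zero) yields $\|\Pi_{\mathcal{D}}^A(\textbf{x})-\Pi_{\mathcal{D}}^A(\textbf{y})\|_A\leq\|\textbf{x}-\textbf{y}\|_A$, as required. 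The argument is elementary, and there is no substantive obstacle: the only points demanding care are getting the sign and direction of the variational inequality right when it is applied to each of the two projections, and invoking the symmetry of $A$ (here $A=\hat{V}_t$ is diagonal, hence symmetric) so that $\langle\cdot,\cdot\rangle_A$ is a symmetric bilinear form and Cauchy--Schwarz applies.
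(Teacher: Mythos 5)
Your proof is correct: the first-order optimality (variational inequality) characterization of the $A$-weighted projection, applied at both points, summed, and combined with Cauchy--Schwarz in the $A$-inner product, is the standard argument for non-expansiveness, and you handle the side issues appropriately (closedness of $\mathcal{D}$ for existence of the projection, symmetry of $A$ so that $\langle\cdot,\cdot\rangle_A$ is an inner product, and the trivial case $\|\Pi_{\mathcal{D}}^A(\textbf{x})-\Pi_{\mathcal{D}}^A(\textbf{y})\|_A=0$). Note that the paper gives no proof of this lemma at all---it is imported as a known result from \citep{mcmahan2010adaptive}---so your argument simply supplies the omitted standard proof, and there is no methodological difference to report.
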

Rearranging \eqref{ineq:th1:1}, we have
\begin{equation}
\label{SAdamD:gtda11}
\begin{split}
\textbf{g}_t^{\top}(\textbf{x}_t-\textbf{x}_*)\leq& \frac{\|\textbf{x}_t-\textbf{x}_*\|^2_{\hat{V}_t}-\|\textbf{x}_{t+1}
-\textbf{x}_*\|^2_{\hat{V}_t}}{2\alpha_t(1-\beta_{1t})}+\frac{\beta_{1t}}{1-\beta_{1t}}\hat{\textbf{g}}^{\top}_{t-1}(\textbf{x}_*-\textbf{x}_t)\\&
+\frac{\alpha_t}{2(1-\beta_{1t})}\|\hat{\textbf{g}}_t\|^2_{\hat{V}^{-1}_t}\\
\leq& \frac{\|\textbf{x}_t-\textbf{x}_*\|^2_{\hat{V}_t}-\|\textbf{x}_{t+1}
-\textbf{x}_*\|^2_{\hat{V}_t}}{2\alpha_t(1-\beta_{1t})}+\frac{\beta_{1t}}{1-\beta_{1t}}\hat{\textbf{g}}^{\top}_{t-1}(\textbf{x}_*-\textbf{x}_t)\\&
+\frac{\alpha_t}{2(1-\beta_{1})}\|\hat{\textbf{g}}_t\|^2_{\hat{V}^{-1}_t}.
\end{split}
\end{equation}
where the last inequality is due to $\beta_{1t}\leq\beta_1$. We proceed to bound the second term of the inequality above. When $t\geq2$, by Young's inequality and the definition of $\beta_{1t}$, we get
\begin{equation}
\label{SAdamD:gtda22}
\begin{split}
\frac{\beta_{1t}}{1-\beta_{1t}}\hat{\textbf{g}}^{\top}_{t-1}(\textbf{x}_*-\textbf{x}_t)\leq& \frac{\alpha_{t-1}\beta_{1t}}{2(1-\beta_{1t})}\|\hat{\textbf{g}}_{t-1}\|^2_{\hat{V}^{-1}_{t-1}}+\frac{\beta_{1t}}{2\alpha_{t-1}(1-\beta_{1t})}\|\textbf{x}_t-\textbf{x}_*\|^2_{\hat{V}_{t-1}}\\
\leq& \frac{\alpha_{t-1}}{2(1-\beta_{1})}\|\hat{\textbf{g}}_{t-1}\|^2_{\hat{V}^{-1}_{t-1}}+\frac{\beta_{1t}}{2\alpha_{t-1}(1-\beta_{1t})}\|\textbf{x}_t-\textbf{x}_*\|^2_{\hat{V}_{t-1}}.
\end{split}
\end{equation}
When $t=1$, this term becomes 0 since $\hat{\textbf{g}}_{0}=\textbf{0}$ in Algorithm \ref{SC-Adam}. Plugging \eqref{SAdamD:gtda11} and \eqref{SAdamD:gtda22} into \eqref{ineq:rew}, we get the following inequality, of which we divide the righthand side into three parts and upper bound each of them one by one.
\begin{equation*}
\begin{split}
R(T){\leq}&\underbrace{\sum_{t=1}^T\left(\frac{\|\textbf{x}_t-\textbf{x}_*\|^2_{\hat{V}_t}-\|\textbf{x}_{t+1}-\textbf{x}_*\|^2_{\hat{V}_t}}{2\alpha_t(1-\beta_{1t})}
-\frac{\lambda}{2}\|\textbf{x}_t-\textbf{x}_*\|^2\right)}_{P_1}\\
&+\underbrace{\frac{1}{2(1-\beta_{1})}\sum_{t=2}^T\alpha_{t-1}\|\hat{\textbf{g}}_{t-1}\|^2_{\hat{V}^{-1}_{t-1}}+\frac{\sum_{t=1}^T\alpha_t\|\hat{\textbf{g}}_t\|^2_{\hat{V}^{-1}_t}}{2(1-\beta_{1})}}_{P_2}+\underbrace{\sum_{t=2}^T\frac{\beta_{1t}}{2\alpha_{t-1}(1-\beta_{1t})}\|\textbf{x}_t-\textbf{x}_*\|^2_{\hat{V}_{t-1}}}_{P_3}.
\end{split}
\end{equation*}
To bound $P_1$, we have
\begin{equation}
\begin{split}
\label{ineq:P1}
P_1=&\frac{\|\textbf{x}_1-\textbf{x}_*\|^2_{\hat{V}_1}}{2\alpha_1(1-\beta_{1})}
\underbrace{-\frac{\|\textbf{x}_{T+1}-\textbf{x}_*\|^2_{V_T}}{2\alpha_T(1-\beta_{1T})}}_{\leq0}-\sum_{t=1}^T\left(\frac{\lambda}{2}\|\textbf{x}_t-\textbf{x}_*\|^2\right)\\
&+\sum_{t=2}^T\left(\frac{\|\textbf{x}_t-\textbf{x}_*\|^2_{\hat{V}_t}}{2\alpha_t(1-\beta_{1t})}
-\frac{\|\textbf{x}_{t}-\textbf{x}_*\|^2_{\hat{V}_{t-1}}}{2\alpha_{t-1}(1-\beta_{1(t-1)})}\right)\\
\leq&\frac{\|\textbf{x}_1-\textbf{x}_*\|^2_{\hat{V}_1}}{2\alpha_1(1-\beta_{1})}-\sum_{t=1}^T\left(\frac{\lambda}{2}\|\textbf{x}_t-\textbf{x}_*\|^2\right)+\sum_{t=2}^T\frac{1}{1-\beta_{1t}}\left(\frac{\|\textbf{x}_t-\textbf{x}_*\|^2_{\hat{V}_t}}{2\alpha_t}
-\frac{\|\textbf{x}_{t}-\textbf{x}_*\|^2_{\hat{V}_{t-1}}}{2\alpha_{t-1}}\right)\\
=& \sum_{t=2}^T\frac{1}{2\alpha(1-\beta_{1t})}\big(t{\|\textbf{x}_t-\textbf{x}_*\|^2_{\hat{V}_t}}
-(t-1){\|\textbf{x}_{t}-\textbf{x}_*\|^2_{\hat{V}_{t-1}}}-{\lambda\alpha(1-\beta_{1t})}\|\textbf{x}_t-\textbf{x}_*\|^2\big)\\
&+\left(\frac{\|\textbf{x}_1-\textbf{x}_*\|^2_{\hat{V}_1}}{2\alpha_1(1-\beta_{1})}-\frac{\lambda}{2}\|\textbf{x}_1-\textbf{x}_*\|^2\right).
\end{split}
\end{equation}
where the inequality is derived from $\beta_{1(t-1)}\geq\beta_t$.  For the first term in the last equality of \eqref{ineq:P1}, we have
\begin{equation}
\begin{split}
\label{eq17z}
&t{\|\textbf{x}_t-\textbf{x}_*\|^2_{\hat{V}_t}}
-(t-1){\|\textbf{x}_{t}-\textbf{x}_*\|^2_{\hat{V}_{t-1}}}-{\lambda\alpha(1-\beta_{1t})}\|\textbf{x}_t-\textbf{x}_*\|^2\\
= & \sum_{i=1}^d(x_{t,i}-x_{*,i})^2(t\hat{v}_{t,i}-(t-1)\hat{v}_{t-1,i}-\lambda\alpha(1-\beta_{1t}))\\
=&\sum_{i=1}^d(x_{t,i}-x_{*,i})^2(t{v}_{t,i}-(t-1){v}_{t-1,i}-\lambda\alpha(1-\beta_{1t}))\\
\overset{\eqref{ineq:thm1ineq2}} {\leq} &\sum_{i=1}^d(x_{t,i}-x_{*,i})^2(\alpha\lambda(1-\beta_{1})-\lambda\alpha(1-\beta_{1t}))\leq0
\end{split}
\end{equation}
where the second equality is because $\hat{v}_{t,i}={v}_{t,i}+\frac{\delta}{t}$, and the second inequality is due to $1-\beta_{1}\leq1-\beta_{1t}$.\\
For the second term of \eqref{ineq:P1}, we have
\begin{equation}
\begin{split}
\label{eq:comz}
\left(\frac{\|\textbf{x}_1-\textbf{x}_*\|^2_{\hat{V}_1}}{2\alpha_1(1-\beta_{1})}-\frac{\lambda}{2}\|\textbf{x}_1-\textbf{x}_*\|^2\right)=&\sum_{i=1}^d(x_{1,i}-x_{*,i})^2\left(\frac{\hat{v}_{1,i}-\lambda\alpha(1-\beta_1)}{2\alpha(1-\beta_1)}\right)\\
=&\sum_{i=1}^d(x_{1,i}-x_{*,i})^2\left(\frac{{v}_{1,i}-\lambda\alpha(1-\beta_1)+\delta}{2\alpha(1-\beta_1)}\right)\\
\leq&\sum_{i=1}^d(x_{1,i}-x_{*,i})^2\left(\frac{\delta}{2\alpha(1-\beta_1)}\right)\\
\leq& \frac{dD_{\infty}^2\delta}{2\alpha(1-\beta_1)}
\end{split}
\end{equation}
where first inequality is due to Condition \ref{condi1}, and the second inequality follows from Assumption \ref{asu:def}. Combining  \eqref{ineq:P1}, \eqref{eq17z} and \eqref{eq:comz}, we get
\begin{equation}
\label{P11}
P_1\leq\frac{dD_{\infty}^2\delta}{2\alpha(1-\beta_1)}.
\end{equation}
To bound $P_2$, we introduce the following lemma.
\begin{lem}
\label{lem:alphagvv}
The following inequality holds
\emph{
\begin{equation}
\label{ineq:lmgvv}
\sum_{t=1}^T\alpha_t\|\hat{\textbf{g}}_t\|_{\hat{V}_t^{-1}}^2\leq \frac{\alpha\zeta}{(1-\beta_1)^2}\sum_{i=1}^d\log\left(\frac{1}{\zeta\delta}\sum_{j=1}^Tg^2_{j,i}+1\right).
\end{equation}}
\end{lem}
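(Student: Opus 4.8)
The plan is to reduce the weighted-norm sum to a per-coordinate telescoping of logarithms, treating the first-order momentum $\hat{\textbf{g}}_t$ separately from the adaptive denominator $\hat{V}_t$. Writing the left-hand side of \eqref{ineq:lmgvv} coordinate-wise as $\sum_{t=1}^T\frac{\alpha}{t}\sum_{i=1}^d\frac{\hat{g}_{t,i}^2}{\hat{v}_{t,i}}$, the first task is to strip off the momentum. Unrolling Step 5 of Algorithm \ref{SC-Adam} gives $\hat{g}_{t,i}=\sum_{j=1}^t a_{tj}g_{j,i}$ with $a_{tj}=(1-\beta_{1j})\prod_{l=j+1}^t\beta_{1l}$, and since $\beta_{1l}\le\beta_1$ we have $a_{tj}\le\beta_1^{t-j}$. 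A Cauchy--Schwarz step then yields $\hat{g}_{t,i}^2\le\bigl(\sum_{j=1}^t\beta_1^{t-j}\bigr)\bigl(\sum_{j=1}^t\beta_1^{t-j}g_{j,i}^2\bigr)\le\frac{1}{1-\beta_1}\sum_{j=1}^t\beta_1^{t-j}g_{j,i}^2$, which accounts for one factor of $(1-\beta_1)^{-1}$ in the claimed bound.

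Next I would substitute this into the left-hand side and exchange the order of summation over $t$ and $j$, turning the expression into $\frac{\alpha}{1-\beta_1}\sum_{i=1}^d\sum_{j=1}^T g_{j,i}^2\sum_{t=j}^T\frac{\beta_1^{t-j}}{t\hat{v}_{t,i}}$. The crucial observation is that Condition \ref{condi2}, together with the identity $\hat{v}_{t,i}=v_{t,i}+\delta/t$ (so that $v_{t,i}$ unrolls to exactly the quantity appearing in \eqref{ineq:thmlowerbound}), gives the lower bound $t\hat{v}_{t,i}\ge\frac{1}{\zeta}\sum_{k=1}^t g_{k,i}^2+\delta$, which---unlike $t\hat{v}_{t,i}$ itself---is manifestly non-decreasing in $t$. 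Abbreviating $S_j:=\sum_{k=1}^j g_{k,i}^2$, this lets me replace $\frac{1}{t\hat{v}_{t,i}}$ by $\frac{\zeta}{S_t+\zeta\delta}\le\frac{\zeta}{S_j+\zeta\delta}$ for every $t\ge j$, pull that factor out of the inner sum, and bound the residual geometric series $\sum_{t\ge j}\beta_1^{t-j}\le\frac{1}{1-\beta_1}$, producing the second factor of $(1-\beta_1)^{-1}$. At this point the estimate has collapsed to $\frac{\alpha\zeta}{(1-\beta_1)^2}\sum_{i=1}^d\sum_{j=1}^T\frac{g_{j,i}^2}{S_j+\zeta\delta}$.

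The final step is a standard telescoping argument: since $g_{j,i}^2=S_j-S_{j-1}\ge0$ and $\log x-\log y\ge(x-y)/x$ for $x\ge y>0$, I have $\frac{S_j-S_{j-1}}{S_j+\zeta\delta}\le\log\frac{S_j+\zeta\delta}{S_{j-1}+\zeta\delta}$, so the inner sum telescopes (using $S_0=0$) to $\log\bigl(\frac{S_T+\zeta\delta}{\zeta\delta}\bigr)=\log\bigl(\frac{1}{\zeta\delta}\sum_{j=1}^T g_{j,i}^2+1\bigr)$, which after summing over $i$ reproduces exactly \eqref{ineq:lmgvv}.

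I expect the main obstacle to be the momentum handling and, in particular, the coupling between the numerator $\hat{g}_{t,i}^2$ (which mixes gradients across all past rounds) and the denominator $\hat{v}_{t,i}$ once the order of summation has been swapped. The clean way through is to avoid arguing directly about the monotonicity of $t\hat{v}_{t,i}$---which need not hold under the general Conditions \ref{condi1} and \ref{condi2}---and instead to invoke the monotone lower bound $\frac{1}{\zeta}S_t+\delta$ supplied by Condition \ref{condi2}; arranging for the two factors of $(1-\beta_1)^{-1}$ to appear exactly (one from Cauchy--Schwarz, one from the geometric sum over $t$) is the bookkeeping that requires the most care.
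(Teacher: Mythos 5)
Your proposal is correct and follows essentially the same route as the paper's proof: unroll the momentum, apply Cauchy--Schwarz with the bound $\prod\beta_{1(\cdot)}\le\beta_1^{t-j}$ to extract one factor of $(1-\beta_1)^{-1}$, invoke Condition \ref{condi2} to replace $t\hat v_{t,i}$ by the monotone lower bound $\tfrac{1}{\zeta}\sum_{k\le t}g_{k,i}^2+\delta$ (and then by its value at $j$), sum the geometric series for the second $(1-\beta_1)^{-1}$, and finish with the $\log$-telescoping inequality that the paper isolates as Lemma \ref{lem:scsum}. The only difference is cosmetic bookkeeping in when the order of summation over $t$ and $j$ is exchanged.
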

By Lemma \ref{lem:alphagvv}, we have
\begin{equation}
\begin{split}
\label{P22}
P_2&=\frac{1}{2(1-\beta_{1})}\sum_{t=2}^T\alpha_{t-1}\|\hat{\textbf{g}}_{t-1}\|^2_{\hat{V}^{-1}_{t-1}}+\frac{\sum_{t=1}^T\alpha_t\|\hat{\textbf{g}}_t\|^2_{\hat{V}^{-1}_t}}{2(1-\beta_{1})}\\
&\leq\frac{1}{(1-\beta_{1})}\sum_{t=1}^T\alpha_t\|\hat{\textbf{g}}_t\|^2_{\hat{V}^{-1}_t}\\
&\overset{\eqref{ineq:lmgvv}}{\leq} \frac{\alpha\zeta}{(1-\beta_1)^3}\sum_{i=1}^d\log\left(\frac{1}{\zeta\delta}\sum_{j=1}^Tg^2_{j,i}+1\right).
\end{split}
\end{equation}
Finally, we turn to upper bound $P_3$:
\begin{equation*}
\label{P33}
\begin{split}
P_3=&\sum_{t=2}^T\frac{\beta_{1t}}{2\alpha_{t-1}(1-\beta_{1t})}\|\textbf{x}_t-\textbf{x}_*\|^2_{\hat{V}_{t-1}}\\
=&\sum_{i=1}^d\sum_{t=2}^T\frac{\beta_{1t}}{2\alpha(1-\beta_{1t})}(x_{t,i}-x_{*,i})^2(t-1)\hat{v}_{t-1,i}\\
\leq& \frac{D_{\infty}^2(G_{\infty}^2+\delta)}{2\alpha}\sum_{i=1}^d\sum_{t=1}^T\frac{\beta_{1t}}{1-\beta_{1t}}t\\
\leq& \frac{\beta_1D_{\infty}^2(G_{\infty}^2+\delta)}{2\alpha}\sum_{i=1}^d\sum_{t=1}^T\frac{\nu^{t-1}}{1-\beta_1}t\\
=&\frac{\beta_1D_{\infty}^2(G_{\infty}^2+\delta)}{2\alpha(1-\beta_1)}\sum_{i=1}^d\underbrace{\sum_{t=0}^{T-1}\nu^t(t+1)}_{P'_3}.\\
\end{split}
\end{equation*}
To further bound $P'_3$, we have
\begin{equation}
\begin{split}
\label{P31}
P'_3=&\sum_{t=0}^{T-1}\nu^tt +\nu^t\\
=&\left(\frac{(T-1)\nu^{T+1}-T\nu^T+\nu}{(\nu-1)^2}+\frac{1-\nu^T}{1-\nu}\right)\\
=&\frac{1-T(\nu^{T}-\nu^{T+1})-\nu T}{(1-\nu)^2}\\
\leq&\frac{1}{(1-\nu)^2}
\end{split}
\end{equation}
where the inequality follows from $\nu^{T}\geq\nu^{T+1}$. Thus,
\begin{equation}
\label{P33333}
P_3\leq\frac{d\beta_1D_{\infty}^2(G_{\infty}^2+\delta)}{2\alpha(1-\beta_1)(\nu-1)^2}.
\end{equation}
We complete the proof by combining \eqref{P11}, \eqref{P22} and \eqref{P33333}.
\section{Proof of Corollary \ref{cor:thm1}}
For the first condition, we have
\begin{equation}
\begin{split}
\label{eq25}
t{v}_{t,i}-(t-1){v}_{t-1,i}
=&t\beta_{2t}v_{t-1,i}+t(1-\beta_{2t})g_{t,i}^2-(t-1)v_{t-1,i}\\
\leq&t\left(1-\frac{\gamma}{t}\right)v_{t-1,i}+t\frac{1}{t}g^2_{t,i}-(t-1)v_{t-1,i}\\
\leq &(t-\gamma-(t-1))v_{t-1,i}+ G_{\infty}^2\\
\leq &(2-\gamma)G_{\infty}^2
\end{split}
\end{equation}
where the first inequality is derived from the definition of $\beta_{2t}$, the second and the third inequalities are due to Assumption \ref{asu:osco}. Based on \eqref{eq25}, for any $\alpha\geq\frac{(2-\gamma)G^2_{\infty}}{\lambda(1-\beta_1)}$, we have
$\frac{tv_{t,i}}{\alpha}-\frac{(t-1)v_{t-1,i}}{\alpha}\leq\lambda(1-\beta_1)$ holds for all $t\in[T]$ and $i\in[d]$.

For the second condition, we have
\begin{equation}
\begin{split}
t\sum_{j=1}^t{\prod_{k=1}^{t-j}\beta_{2(t-k+1)}\left(1-\beta_{2j}\right)g^2_{j,i}}
\geq &t\sum_{j=1}^t\prod_{k=1}^{t-j}\left(1-\frac{1}{t-k+1}\right)\frac{\gamma}{j}g^2_{j,i}\\
=& t\sum_{j=1}^t\prod_{k=1}^{t-j}\frac{t-k}{t-k+1}\frac{\gamma}{j}g^2_{j,i}\\
=&t\sum_{j=1}^t\frac{j}{t}\frac{\gamma}{j}g^2_{j,i}\\
=&\gamma \sum_{j=1}^tg^2_{j,i}
\end{split}
\end{equation}
where the inequality follows from $\beta_t\geq1-\frac{1}{t}$ and $1-\beta_t\geq\frac{\gamma}{t}$.
\section{Proof of Lemma \ref{lem:alphagvv}}
\label{1216}
We begin with the following lemma that is central to our analysis.
\label{app:proofl2}
\begin{lem}
\label{lem:scsum}
For all $i\in[d]$ and $t\in[T]$, we have
\begin{equation}
\begin{split}
\label{ineq:lemma44}
\sum_{j=1}^T\frac{g^2_{j,i}}{\sum_{k=1}^jg_{k,i}^2+\zeta\delta}\leq \log\left(\frac{\sum_{j=1}^Tg^2_{j,i}}{\zeta\delta}+1\right).
\end{split}
\end{equation}
\end{lem}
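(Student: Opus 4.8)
The plan is to rewrite each summand as a telescoping-friendly ratio and then apply an elementary logarithmic inequality term by term. Fix the coordinate $i\in[d]$ and abbreviate $a_j=g_{j,i}^2\geq 0$. Define the running sums $S_0=\zeta\delta$ and $S_j=\zeta\delta+\sum_{k=1}^j a_k$ for $j\geq 1$, so that $S_j-S_{j-1}=a_j$ and every $S_j>0$ because $\delta>0$ and $\zeta>0$. With this notation the left-hand side of \eqref{ineq:lemma44} becomes $\sum_{j=1}^T \frac{S_j-S_{j-1}}{S_j}$, and the target right-hand side is exactly $\log(S_T/S_0)$, since $S_T/S_0=\frac{1}{\zeta\delta}\sum_{j=1}^T g_{j,i}^2+1$.

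First I would establish the pointwise bound $\frac{S_j-S_{j-1}}{S_j}\leq \log S_j-\log S_{j-1}$. This follows from the standard scalar inequality $\log u\leq u-1$, valid for all $u>0$. Applying it with $u=S_{j-1}/S_j\in(0,1]$ (well-defined since $\{S_j\}$ is non-decreasing and strictly positive) gives $\log(S_{j-1}/S_j)\leq S_{j-1}/S_j-1$, equivalently $\log(S_j/S_{j-1})\geq 1-S_{j-1}/S_j=(S_j-S_{j-1})/S_j$, which is the claim. In the degenerate case $a_j=0$ both sides vanish, so the bound holds there as well.

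Summing the pointwise bound over $j=1,\dots,T$ causes the right-hand side to telescope to $\log S_T-\log S_0=\log(S_T/S_0)$. Substituting $S_T=\zeta\delta+\sum_{j=1}^T g_{j,i}^2$ and $S_0=\zeta\delta$ reproduces the right-hand side of \eqref{ineq:lemma44} verbatim. As the coordinate $i$ was arbitrary, the lemma follows; the same argument with $T$ replaced by any $t$ would yield the corresponding partial-sum version if needed.

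There is no deep obstacle here, since this is a standard sum-to-log estimate; the only point demanding care is the orientation of the elementary inequality. One must apply $\log u\leq u-1$ to $u=S_{j-1}/S_j$ rather than to $S_j/S_{j-1}$, so that the resulting bound points in the direction that telescopes correctly after summation. I would also emphasize that the additive offset $\zeta\delta$, positive by the hypotheses $\delta>0$ and $\zeta>0$, is precisely what keeps every denominator strictly positive and so guards against division by zero in the early rounds where the accumulated squared gradients may all vanish.
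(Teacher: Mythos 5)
Your proof is correct and follows essentially the same route as the paper: both define the shifted partial sums $m_j=\zeta\delta+\sum_{k=1}^j g_{k,i}^2$, bound each summand $\frac{m_j-m_{j-1}}{m_j}$ by $\log(m_j/m_{j-1})$ via the elementary inequality $1+x\leq e^x$ (equivalently $\log u\leq u-1$), and telescope. Your remarks on the orientation of the inequality and the role of $\zeta\delta>0$ are accurate but add nothing beyond the paper's argument.
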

\begin{proof}
For any $a\geq b>0$, the inequality $1+x\leq e^x$ implies that
\begin{equation}
\label{ineq:logminn}
\frac{1}{a}(a-b)\leq \log\frac{a}{b}.
\end{equation}
Let $m_0=\zeta\delta$, and $m_j=\sum_{k=1}^jg^2_{k,i}+\zeta\delta>0$.  By \eqref{ineq:logminn}, we have
$$\frac{g^2_{j,i}}{\sum_{k=1}^jg^2_{k,i}+\zeta\delta}=\frac{m_j-m_{j-1}}{m_j}\leq \log{\frac{m_j}{m_{j-1}}}.$$
Summing over 1 to $t$, we have
$$\sum_{j=1}^t\frac{g^2_{j,i}}{\sum_{k=1}^jg^2_{k,i}+\zeta\delta}\leq \log\frac{m_t}{m_0}=\log\left(\frac{\sum_{j=1}^tg^2_{j,i}}{\zeta\delta}+1\right).$$
\end{proof}
Now we turn to the proof of Lemma \ref{lem:alphagvv}. First, expending the last term in the summation by the update rule of Algorithm \ref{SC-Adam}, we get
\begin{equation}
\begin{split}
\alpha_T\|{\hat{\textbf{g}}}_T\|^2_{\hat{V}^{-1}_{T}}=\alpha_T\sum_{i=1}^d\frac{\hat{{g}}^2_{T,i}}{{v}_{T,i}+\frac{\delta}{T}}=\alpha \sum_{i=1}^d\frac{\left(\sum_{j=1}^T(1-\beta_{1j})\prod_{k=1}^{T-j}\beta_{1(T-k+1)}g_{j,i}\right)^2}{T\sum_{j=1}^T(1-\beta_{2j})\Pi_{k=1}^{T-j}\beta_{2(T-k+1)}g^2_{j,i}+\delta}.
\end{split}
\end{equation}
The above equality can be further bounded as
\begin{equation}
\begin{split}
\alpha_T\|\hat{\textbf{g}}_t\|^2_{V^{-1}_{T}}\leq&\alpha \sum_{i=1}^d\frac{\left(\sum_{j=1}^T\prod_{k=1}^{T-j}\beta_{1(T-k+1)}g_{j,i}\right)^2}{T\sum_{j=1}^T(1-\beta_{2j})\Pi_{k=1}^{T-j}\beta_{2(T-k+1)}g^2_{j,i}+\delta}. \\
\leq&\alpha \sum_{i=1}^d\frac{\left(\sum_{j=1}^T\prod_{k=1}^{T-j}\beta_{1(T-k+1)}\right)\left(\sum_{j=1}^T\prod_{k=1}^{T-j}\beta_{1(T-k+1)}g^2_{j,i}\right)}
{T\sum_{j=1}^T(1-\beta_{2j})\Pi_{k=1}^{T-j}\beta_{2(T-k+1)}g^2_{j,i}+\delta}\\
\leq&\alpha \sum_{i=1}^d\frac{\left(\sum_{j=1}^T\beta_1^{T-j}\right)\left(\sum_{j=1}^T\prod_{k=1}^{T-j}\beta_{1(T-k+1)}g^2_{j,i}\right)}
{T\sum_{j=1}^T(1-\beta_{2j})\Pi_{k=1}^{T-j}\beta_{2(T-k+1)}g^2_{j,i}+\delta}\\
\leq& \frac{\alpha}{(1-\beta_1)}\sum_{i=1}^d\frac{\sum_{j=1}^T\beta_1^{T-j}g^2_{j,i}}
{T\sum_{j=1}^T(1-\beta_{2j})\Pi_{k=1}^{T-j}\beta_{2(T-k+1)}g^2_{j,i}+\delta}\\
\overset{\eqref{ineq:thmlowerbound}}{\leq} & \frac{\alpha\zeta}{(1-\beta_1)}\sum_{i=1}^d\frac{\sum_{j=1}^T\beta_1^{T-j}g^2_{j,i}}
{\sum_{j=1}^Tg_{j,i}^2+\zeta\delta}\leq  \frac{\alpha\zeta}{(1-\beta_1)}\sum_{i=1}^d\sum_{j=1}^T\beta_1^{T-j}\frac{g^2_{j,i}}
{\sum_{k=1}^jg_{k,i}^2+\zeta\delta}
\end{split}
\end{equation}
where the first inequality is due to $1-\beta_{1j}\leq1$, the second inequality follows from Cauchy-Schwarz inequality, the third inequality is due to $\beta_{1t}\leq \beta_1$. Let $r_j={g^2_{j,i}}/{(\sum_{k=1}^jg^2_{k,i}+\zeta\delta)}$.
 Using similar arguments for all time steps and summing over 1 to $T$, we have
\begin{equation}
\label{eq33}
\begin{split}
\sum_{t=1}^T\alpha_t\|\hat{\textbf{g}}_t\|^2_{V^{-1}_{t}}\leq &\frac{\alpha\zeta}{(1-\beta_1)}\sum_{i=1}^d\sum_{t=1}^T\sum_{j=1}^t\beta_1^{t-j}r_j \\
=& \frac{\alpha\zeta}{(1-\beta_1)}\sum_{i=1}^d\sum_{j=1}^T\sum_{l=0}^{T-j}\beta_1^{l}r_j\\
=& \frac{\alpha\zeta}{(1-\beta_1)}\sum_{i=1}^d\sum_{j=1}^T\frac{\sum_{l=0}^{T-j}\beta^l_{1}g^2_{j,i}}
{\sum_{k=1}^jg^2_{k,i}+\zeta\delta}\\
\leq & \frac{\alpha\zeta}{(1-\beta_1)^2}\sum_{i=1}^d\sum_{j=1}^T\frac{g^2_{j,i}}{\sum_{k=1}^jg_{k,i}^2+\zeta\delta}\\
\overset{\eqref{ineq:lemma44}}{\leq} & \frac{\alpha\zeta}{(1-\beta_1)^2}\sum_{i=1}^d\log\left(\frac{\sum_{j=1}^Tg^2_{j,i}}{\zeta\delta}+1\right).
\end{split}
\end{equation}
\section{SAdam with decaying regularization factor}
\label{appd}
\label{SAdamD11111}
\begin{algorithm}
\caption{SAdam with time-variant $\delta_t$ (SAdamD)}
\label{SAdamD}
\begin{algorithmic} [1]
\STATE \textbf{Input:} $\{\beta_{1t}\}_{t=1}^T, \{\beta_{2t}\}_{t=1}^T, \{\delta_t\}_{t=1}^T$
\STATE \textbf{Initialize:} $\hat{\textbf{g}}_0=\textbf{0}$, $\hat{{V}}_0=\textbf{0}_{d\times d}, \textbf{x}_1=\textbf{0}$.
\FOR{$t=1,\dots,T$}
\STATE $\textbf{g}_t=\nabla f_t(\textbf{x}_t)$
\STATE $\hat{\textbf{g}}_t=\beta_{1t}\hat{\textbf{g}}_{t-1}+(1-\beta_{1t})\textbf{g}_t$
\STATE ${{V}}_t=\beta_{2t}{{V}}_{t-1}+(1-\beta_{2t}){\rm diag}(\textbf{g}_t\textbf{g}^{\top}_t)$
\STATE $\hat{{V}}_t=V_t+{\rm diag}\left(\frac{\delta_t}{t}\right)$
\STATE $\textbf{x}_{t+1}=\Pi_{\mathcal{D}}^{\hat{V}_t}\left(\textbf{x}_t-{\frac{\alpha}{t}}{\hat{V}^{-1}_{t}}\hat{\textbf{g}}_t\right)$
\ENDFOR
\end{algorithmic}
\end{algorithm}
In this section, we establish a generalized version of SAdam, which employs a time-variant regularization factor $\delta_{t,i}$ for each dimension $i$, instead of a fixed one for all $i\in[d]$ and $t\in[T]$ as in the original SAdam. The algorithm is referred to as SAdamD and summarized in Algorithm \ref{SAdamD}. It can be seen that our SAdamD reduces to SC-RMSprop with time-variant $\delta_t$ when $\beta_{1t}=0$ and $1-\frac{1}{t}\leq\beta_{2t}\leq1-\frac{\gamma}{t}$.\\
For SAdamD, we prove the following theoretical guarantee:
\begin{thm}
\label{th5}
Suppose Assumptions \ref{asu:osco} and \ref{asu:def} hold, and all loss functions $f_1(\cdot),\dots, f_T(\cdot)$ are $\lambda$-strongly convex. Let $\{\delta_{t,i}\}_{t=1}^T\in(0,1]^T$ be a non-increasing sequence for all $i\in[d]$, $\beta_{1t}=\beta_1\nu^{t-1}$ where $\beta_1\in[0,1), \nu\in[0,1)$, and $\{\beta_{2t}\}_{t=1}^T\in[0,1]^T$ be a parameter sequence such that Conditions \ref{condi1} and \ref{condi2} are satisfied.
Let $\alpha\geq\frac{C}{\lambda}$. The regret of SAdamD satisfies
\begin{equation}
\begin{split}
\label{thleq}
R(T)\leq&\frac{D_{\infty}^2\sum_{i=1}^d\delta_{1,i}}{2\alpha(1-\beta_1)}
+\frac{\alpha\zeta}{(1-\beta_1)^3}\sum_{i=1}^d\log\left(\frac{1}{\zeta\delta_{T,i}}\sum_{j=1}^Tg^2_{j,i}+1\right)
+\frac{\beta_1D_{\infty}^2\left(dG_{\infty}^2+\sum_{i=1}^d\delta_{1,i}\right)}{2\alpha(1-\beta_1)(\nu-1)^2}.
\end{split}
\end{equation}
\end{thm}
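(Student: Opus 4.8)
The plan is to follow the proof of Theorem~\ref{th1} almost verbatim, since SAdamD differs from SAdam only in that the fixed regularizer $\delta$ is replaced by the per-coordinate non-increasing sequence $\delta_{t,i}$, so that now $\hat{v}_{t,i}=v_{t,i}+\delta_{t,i}/t$. First I would apply strong convexity as in \eqref{ineq:rew} to get $R(T)\le\sum_{t=1}^T\textbf{g}_t^{\top}(\textbf{x}_t-\textbf{x}_*)-\frac{\lambda}{2}\|\textbf{x}_t-\textbf{x}_*\|^2$, then combine the update rule with the non-expansiveness of the $\hat{V}_t$-weighted projection (Lemma~\ref{lem:projection}) and Young's inequality on the momentum cross term, reaching the same decomposition $R(T)\le P_1+P_2+P_3$. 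The structural steps up to this point are identical, since they never use the specific form of the regularizer.

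For $P_1$ I would repeat the telescoping argument of \eqref{ineq:P1}--\eqref{eq17z}. The only change is in the per-coordinate quantity $t\hat{v}_{t,i}-(t-1)\hat{v}_{t-1,i}$: since $t\hat{v}_{t,i}=tv_{t,i}+\delta_{t,i}$, this now equals $tv_{t,i}-(t-1)v_{t-1,i}+(\delta_{t,i}-\delta_{t-1,i})$, and monotonicity makes the extra term $\delta_{t,i}-\delta_{t-1,i}\le0$. Hence Condition~\ref{condi1} still forces the whole expression to be nonpositive. The surviving initial term becomes $\sum_{i=1}^d(x_{1,i}-x_{*,i})^2\delta_{1,i}/(2\alpha(1-\beta_1))$ because $\hat{v}_{1,i}=v_{1,i}+\delta_{1,i}$, which together with Assumption~\ref{asu:def} yields the first summand $D_{\infty}^2\sum_{i=1}^d\delta_{1,i}/(2\alpha(1-\beta_1))$.

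The main work, and the step I expect to be the genuine obstacle, is re-deriving Lemma~\ref{lem:alphagvv} for the time-varying regularizer, which controls $P_2$. I would follow the chain of inequalities in the proof of that lemma up to \eqref{eq33}, but at the point where Condition~\ref{condi2} is invoked I must handle the denominator $tv_{t,i}+\delta_{t,i}$. Using $tv_{t,i}\ge\frac{1}{\zeta}\sum_{j=1}^tg^2_{j,i}$ from Condition~\ref{condi2} together with the crucial monotonicity bound $\delta_{t,i}\ge\delta_{T,i}$, I can lower bound the denominator by $\frac{1}{\zeta}\big(\sum_{j=1}^tg^2_{j,i}+\zeta\delta_{T,i}\big)$, so that a single base $\zeta\delta_{T,i}$ appears uniformly in $t$. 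Feeding this into Lemma~\ref{lem:scsum} with $m_0=\zeta\delta_{T,i}$ then produces $\log\big(\frac{1}{\zeta\delta_{T,i}}\sum_{j=1}^Tg^2_{j,i}+1\big)$, and carrying the factors of $(1-\beta_1)^{-1}$ through $P_2$ gives the second summand. The subtlety is that $\delta_{t,i}$ varies with $t$ inside the sum whereas the logarithmic telescoping of Lemma~\ref{lem:scsum} requires one fixed base; monotonicity is exactly what lets me replace every $\delta_{t,i}$ by its smallest value $\delta_{T,i}$ without destroying the data-dependent form.

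Finally, for $P_3$ I would reuse the computation ending at \eqref{P33333}, replacing only the bound on $(t-1)\hat{v}_{t-1,i}$: here $(t-1)\hat{v}_{t-1,i}=(t-1)v_{t-1,i}+\delta_{t-1,i}\le t(G_{\infty}^2+\delta_{1,i})$, again using $\delta_{t-1,i}\le\delta_{1,i}$ and $v_{t-1,i}\le G_{\infty}^2$. Summing over $i$ turns $G_{\infty}^2+\delta$ into $dG_{\infty}^2+\sum_{i=1}^d\delta_{1,i}$, while the geometric bound $P'_3\le(1-\nu)^{-2}$ from \eqref{P31} transfers verbatim, giving the third summand. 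Collecting the three bounds yields \eqref{thleq}.
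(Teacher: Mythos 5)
Your proposal is correct and follows essentially the same route as the paper: the same $P_1+P_2+P_3$ decomposition, the same use of $\delta_{t,i}-\delta_{t-1,i}\leq 0$ in the telescoping for $P_1$, the same replacement of the time-varying regularizer by $\delta_{T,i}$ (via monotonicity) to obtain a fixed base for the logarithmic telescoping in $P_2$, and the same bound $(t-1)\hat{v}_{t-1,i}\leq t(G_{\infty}^2+\delta_{1,i})$ for $P_3$. You have also correctly identified the one genuinely new ingredient relative to Theorem \ref{th1}, namely the uniform lower bound on the denominator by $\tfrac{1}{\zeta}\bigl(\sum_{j}g_{j,i}^2+\zeta\delta_{T,i}\bigr)$, which is exactly how the paper's Lemma \ref{lem:alphagv} proceeds.
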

By setting $\beta_{1t}=0$ and $1-\frac{1}{t}\leq\beta_{2t}\leq1-\frac{\gamma}{t}$, we can derive the following regret bound for SC-RMSprop:
\begin{cor}
Suppose Assumptions \ref{asu:osco} and \ref{asu:def} hold, and all loss functions $f_1(\cdot),\dots, f_T(\cdot)$ are $\lambda$-strongly convex. Let $\{\delta_{t,i}\}_{t=1}^T\in(0,1]^T$ be a non-increasing sequence for all $i\in[d]$, and $1-\frac{1}{t}\leq\beta_{2t}\leq1-\frac{\gamma}{t}$, where $\gamma\in(0,1]$. Let $\alpha\geq\frac{(2-\gamma)G^2_{\infty}}{\lambda}$. Then SAdamD reduces to SC-RMSprop, and the regret satisfies
\begin{equation}
\begin{split}
\label{thleq}
R(T)\leq&\frac{D_{\infty}^2\sum_{i=1}^d\delta_{1,i}}{2\alpha}
+\frac{\alpha}{\gamma}\sum_{i=1}^d\log\left(\frac{\gamma}{\delta_{T,i}}\sum_{j=1}^Tg^2_{j,i}+1\right).
\end{split}
\end{equation}
\end{cor}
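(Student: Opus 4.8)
The plan is to obtain this corollary purely as a specialization of Theorem~\ref{th5}, since its hypotheses are exactly those of Theorem~\ref{th5} with $\beta_{1t}=0$ (hence $\beta_1=0$) together with a concrete choice of $\{\beta_{2t}\}$. First I would record the reduction to SC-RMSprop: with $\beta_{1t}=0$ the momentum recursion in Step~5 of Algorithm~\ref{SAdamD} collapses to $\hat{\textbf{g}}_t=\textbf{g}_t$, so the update becomes $\textbf{x}_{t+1}=\Pi_{\mathcal{D}}^{\hat V_t}(\textbf{x}_t-\frac{\alpha}{t}\hat V_t^{-1}\textbf{g}_t)$ with $\hat V_t=V_t+\mathrm{diag}(\delta_t/t)$ and $V_t$ the exponential moving average of past squared gradients, which is precisely the SC-RMSprop update with a per-coordinate time-variant regularizer.

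The substance of the argument is verifying that the prescribed range $1-\frac{1}{t}\le\beta_{2t}\le1-\frac{\gamma}{t}$ satisfies Conditions~\ref{condi1} and~\ref{condi2} with explicit constants $C$ and $\zeta$. I would reuse the two computations already carried out in the proof of Corollary~\ref{cor:thm1}, observing that both conditions constrain only $v_{t,i}$, $\beta_{2t}$ and the gradients, and are therefore insensitive to whether $\delta$ is fixed or time-variant; thus they transfer verbatim to SAdamD. Concretely, the first computation gives $t v_{t,i}-(t-1)v_{t-1,i}\le(2-\gamma)G_\infty^2$ via Assumption~\ref{asu:osco}, so Condition~\ref{condi1} holds with $C=(2-\gamma)G_\infty^2$; since $\beta_1=0$, the admissibility requirement $\alpha\ge C/(\lambda(1-\beta_1))$ reduces to the stated $\alpha\ge(2-\gamma)G_\infty^2/\lambda$. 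The second computation telescopes the product $\prod_{k=1}^{t-j}\beta_{2(t-k+1)}$ into $j/t$ and yields $t\sum_{j=1}^t\prod_{k=1}^{t-j}\beta_{2(t-k+1)}(1-\beta_{2j})g_{j,i}^2\ge\gamma\sum_{j=1}^t g_{j,i}^2$, i.e.\ Condition~\ref{condi2} with $\zeta=1/\gamma$.

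With both conditions in hand I would invoke Theorem~\ref{th5} and substitute $\beta_1=0$ and $\zeta=1/\gamma$ into its bound. The momentum term, which carries the prefactor $\beta_1$, vanishes; the first term loses its $(1-\beta_1)$ denominator and becomes $D_\infty^2\sum_{i=1}^d\delta_{1,i}/(2\alpha)$; and the logarithmic term becomes $\alpha\zeta\sum_{i=1}^d\log(\frac{1}{\zeta\delta_{T,i}}\sum_{j=1}^T g_{j,i}^2+1)=\frac{\alpha}{\gamma}\sum_{i=1}^d\log(\frac{\gamma}{\delta_{T,i}}\sum_{j=1}^T g_{j,i}^2+1)$, which is exactly the claimed bound. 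The only genuinely delicate point is the bookkeeping in the condition-verification step, ensuring the constants $C$ and $\zeta$ are tracked consistently through to the final substitution; but since the heavy lifting is already done in Theorem~\ref{th5} and Corollary~\ref{cor:thm1}, I expect no real obstacle beyond this routine specialization.
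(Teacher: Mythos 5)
Your proposal is correct and matches the paper's (implicit) derivation exactly: the paper obtains this corollary by specializing Theorem~\ref{th5} to $\beta_{1t}=0$, with Conditions~\ref{condi1} and~\ref{condi2} verified for $1-\frac{1}{t}\leq\beta_{2t}\leq1-\frac{\gamma}{t}$ exactly as in the proof of Corollary~\ref{cor:thm1} (yielding $\zeta=1/\gamma$ and the threshold $\alpha\geq(2-\gamma)G_{\infty}^2/\lambda$), and then substituting $\beta_1=0$ into the bound so the momentum term vanishes. Your observation that the two conditions involve only $v_{t,i}$, $\beta_{2t}$ and the gradients, and hence are unaffected by the time-variant $\delta_{t,i}$, is precisely the point that makes the transfer to SAdamD legitimate.
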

Finally, we provide an instantiation of $\delta_t$ and derive the following Corollary.
\begin{cor}
Suppose Assumptions \ref{asu:osco} and \ref{asu:def} hold, and all loss functions $f_1(\cdot),\dots, f_T(\cdot)$ are $\lambda$-strongly convex. Let $\delta_{t,i}=\frac{\xi_2}{1+\xi_1\sum_{j=1}^t{g^2_{j,i}}}$, where $\xi_2\in(0,1]$ and $\xi_1\geq0$ are hypeparameters. Then we have $\delta_{t,i}\in(0,1]$ and is non-increasing $\forall i\in[d], t\in[T]$. Let $1-\frac{1}{t}\leq\beta_{2t}\leq1-\frac{\gamma}{t}$, where $\gamma\in(0,1]$, and  $\alpha\geq\frac{(2-\gamma)G^2_{\infty}}{\lambda}$. Then SAdamD reduces to SC-RMSprop, and the regret satisfies
\begin{equation}
\begin{split}
\label{thleq}
R(T)\leq&\frac{dD_{\infty}^2\xi_2}{2\alpha}
+\frac{\alpha}{\gamma}\sum_{i=1}^d\log\left({\gamma}\sum_{j=1}^Tg^2_{j,i}+\xi_2\right)+\frac{\alpha}{\gamma}\sum_{i=1}^d\log\left(\frac{\xi_1}{\xi_2}\sum_{j=1}^Tg_{j,i}^2+\frac{1}{\xi_2}\right).
\end{split}
\end{equation}
\end{cor}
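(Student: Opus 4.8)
The plan is to obtain this statement as a direct specialization of Theorem~\ref{th5}, so that the only real work is checking the hypotheses and then a short algebraic manipulation of the logarithmic term. First I would verify that the proposed $\delta_{t,i}=\xi_2/(1+\xi_1\sum_{j=1}^t g_{j,i}^2)$ satisfies the requirements of Theorem~\ref{th5}. Since $\xi_1\geq 0$ and $\sum_{j=1}^t g_{j,i}^2\geq 0$, the denominator is at least $1$, so $0<\delta_{t,i}\leq\xi_2\leq 1$; and because $\sum_{j=1}^t g_{j,i}^2$ is non-decreasing in $t$, the denominator is non-decreasing and hence $\delta_{t,i}$ is non-increasing, exactly as Theorem~\ref{th5} demands.

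Next I would discharge Conditions~\ref{condi1} and~\ref{condi2} for the stated $\beta_{2t}$. With $\beta_1=0$ (so that SAdamD collapses to SC-RMSprop), Corollary~\ref{cor:thm1}(1) shows Condition~\ref{condi1} holds with constant $C=(2-\gamma)G_\infty^2$, which is precisely why the hypothesis $\alpha\geq (2-\gamma)G_\infty^2/\lambda$ coincides with $\alpha\geq C/\lambda$; and Corollary~\ref{cor:thm1}(2) shows Condition~\ref{condi2} holds with $\zeta=1/\gamma$. Substituting $\beta_1=0$ and $\zeta=1/\gamma$ into the bound of Theorem~\ref{th5} makes the third ($\beta_1$-proportional) term vanish and reduces the remaining two terms to
\begin{equation*}
R(T)\leq\frac{D_\infty^2\sum_{i=1}^d\delta_{1,i}}{2\alpha}+\frac{\alpha}{\gamma}\sum_{i=1}^d\log\left(\frac{\gamma}{\delta_{T,i}}\sum_{j=1}^T g_{j,i}^2+1\right).
\end{equation*}
For the first term I would simply use $\delta_{1,i}=\xi_2/(1+\xi_1 g_{1,i}^2)\leq\xi_2$, giving $\sum_{i=1}^d\delta_{1,i}\leq d\xi_2$ and hence the target term $\tfrac{dD_\infty^2\xi_2}{2\alpha}$.

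The substantive step, and the one I expect to require the most care, is rewriting the logarithmic term after plugging in $\delta_{T,i}$. Writing $S_i=\sum_{j=1}^T g_{j,i}^2$, we have $1/\delta_{T,i}=(1+\xi_1 S_i)/\xi_2$, so the argument of the log equals $\bigl(\gamma S_i(1+\xi_1 S_i)+\xi_2\bigr)/\xi_2$. The goal is to upper bound this by the product $(\gamma S_i+\xi_2)\bigl(\tfrac{\xi_1}{\xi_2}S_i+\tfrac{1}{\xi_2}\bigr)=(\gamma S_i+\xi_2)(\xi_1 S_i+1)/\xi_2$, since the logarithm of this product splits exactly into the two desired terms $\log(\gamma S_i+\xi_2)$ and $\log\bigl(\tfrac{\xi_1}{\xi_2}S_i+\tfrac{1}{\xi_2}\bigr)$. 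Expanding both numerators, their difference is precisely the cross term $\xi_1\xi_2 S_i\geq 0$, so the inequality holds term by term over the common positive factor $\xi_2$. Summing over $i\in[d]$, multiplying by $\alpha/\gamma$, and combining with the first term then yields the claimed bound. The main obstacle is therefore purely this factorization identity; everything else is bookkeeping inherited from Theorem~\ref{th5} and Corollary~\ref{cor:thm1}.
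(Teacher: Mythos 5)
Your proposal is correct and follows exactly the route the paper intends: the corollary is a direct specialization of the preceding SC-RMSprop corollary (itself Theorem~\ref{th5} with $\beta_1=0$, $\zeta=1/\gamma$), using $\delta_{1,i}\leq\xi_2$ for the first term and the factorization $\gamma S_i(1+\xi_1 S_i)+\xi_2\leq(\gamma S_i+\xi_2)(\xi_1 S_i+1)$ to split the logarithm. The paper omits this verification, and your check of the cross-term $\xi_1\xi_2 S_i\geq0$ is the only nontrivial step; it is right.
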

\section{Proof of Theorem \ref{th5}}
By similar arguments as in the proof of Theorem \ref{th1}, we can upper bound regret as
\begin{equation*}
\begin{split}
R(T){\leq}&\underbrace{\sum_{t=1}^T\left(\frac{\|\textbf{x}_t-\textbf{x}_*\|^2_{\hat{V}_t}-\|\textbf{x}_{t+1}-\textbf{x}_*\|^2_{\hat{V}_t}}{2\alpha_t(1-\beta_{1t})}
-\frac{\lambda}{2}\|\textbf{x}_t-\textbf{x}_*\|^2\right)}_{P_1}\\
&+\underbrace{\frac{1}{2(1-\beta_{1})}\sum_{t=2}^T\alpha_{t-1}\|\hat{\textbf{g}}_{t-1}\|^2_{\hat{V}^{-1}_{t-1}}+\frac{\sum_{t=1}^T\alpha_t\|\hat{\textbf{g}}_t\|^2_{\hat{V}^{-1}_t}}{2(1-\beta_{1})}}_{P_2}+\underbrace{\sum_{t=2}^T\frac{\beta_{1t}}{2\alpha_{t-1}(1-\beta_{1t})}\|\textbf{x}_t-\textbf{x}_*\|^2_{\hat{V}_{t-1}}}_{P_3}.
\end{split}
\end{equation*}
To bound $P_1$, based on \eqref{ineq:P1}, we have
\begin{equation}
\begin{split}
\label{Dineq:P1}
P_1\leq&\sum_{t=2}^T\frac{1}{2\alpha(1-\beta_{1t})}\big(t{\|\textbf{x}_t-\textbf{x}_*\|^2_{\hat{V}_t}}
-(t-1){\|\textbf{x}_{t}-\textbf{x}_*\|^2_{\hat{V}_{t-1}}}-{\lambda\alpha(1-\beta_{1t})}\|\textbf{x}_t-\textbf{x}_*\|^2\big)\\
&+\left(\frac{\|\textbf{x}_1-\textbf{x}_*\|^2_{\hat{V}_1}}{2\alpha_1(1-\beta_{1})}-\frac{\lambda}{2}\|\textbf{x}_1-\textbf{x}_*\|^2\right).
\end{split}
\end{equation}
For the first term in \eqref{Dineq:P1}, we have
\begin{equation}
\begin{split}
\label{eq17}
&t{\|\textbf{x}_t-\textbf{x}_*\|^2_{\hat{V}_t}}
-(t-1){\|\textbf{x}_{t}-\textbf{x}_*\|^2_{\hat{V}_{t-1}}}-{\lambda\alpha(1-\beta_{1t})}\|\textbf{x}_t-\textbf{x}_*\|^2\\
= & \sum_{i=1}^d(x_{t,i}-x_{*,i})^2(t\hat{v}_{t,i}-(t-1)\hat{v}_{t-1,i}-\lambda\alpha(1-\beta_{1t}))\\
=&\sum_{i=1}^d(x_{t,i}-x_{*,i})^2(t{v}_{t,i}-(t-1){v}_{t-1,i}-\lambda\alpha(1-\beta_{1t})+\delta_{t,i}-\delta_{t-1,i})\\
\leq&\sum_{i=1}^d(x_{t,i}-x_{*,i})^2(\underbrace{\lambda\alpha(1-\beta_1)-\lambda\alpha(1-\beta_{1t})}_{\leq0}+\underbrace{\delta_{t,i}-\delta_{t-1,i}}_{\leq0})\\
\leq& 0
\end{split}
\end{equation}
For the second term of \eqref{Dineq:P1}, we have
\begin{equation}
\begin{split}
\label{eq:com}
\left(\frac{\|\textbf{x}_1-\textbf{x}_*\|^2_{\hat{V}_1}}{2\alpha_1(1-\beta_{1})}-\frac{\lambda}{2}\|\textbf{x}_1-\textbf{x}_*\|^2\right)=&\sum_{i=1}^d(x_{1,i}-x_{*,i})^2\left(\frac{\hat{v}_{1,i}-\lambda\alpha(1-\beta_1)}{2\alpha(1-\beta_1)}\right)\\
=&\sum_{i=1}^d(x_{1,i}-x_{*,i})^2\left(\frac{{v}_{1,i}-\lambda\alpha(1-\beta_1)+\delta_{1,i}}{2\alpha(1-\beta_1)}\right) \\
\leq& \frac{D_{\infty}^2\sum_{i=1}^d\delta_{1,i}}{2\alpha(1-\beta_1)}
\end{split}
\end{equation}
where the inequality follows from Condition \ref{condi1}. Combining  \eqref{Dineq:P1}, \eqref{eq17} and \eqref{eq:com}, we have
\begin{equation}
\label{P1}
P_1\leq\frac{D_{\infty}^2\sum_{i=1}^d\delta_{1,i}}{2\alpha(1-\beta_1)}.
\end{equation}
To bound $P_2$, we first introduce the following lemma.
\begin{lem}
\label{lem:alphagv}
The following inequality holds
\emph{
\begin{equation}
\label{ineq:lmgv}
\sum_{t=1}^T\alpha_t\|\hat{\textbf{g}}_t\|_{\hat{V}_t^{-1}}^2\leq \frac{\alpha\zeta}{(1-\beta_1)^2}\sum_{i=1}^d\log\left(\frac{1}{\zeta\delta_{T,i}}\sum_{j=1}^Tg^2_{j,i}+1\right).
\end{equation}}
\end{lem}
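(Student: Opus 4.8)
The plan is to follow the proof of Lemma~\ref{lem:alphagvv} almost verbatim, since the only structural change in Algorithm~\ref{SAdamD} is that the fixed regularizer $\delta$ is replaced by the per-dimension, time-varying quantity $\delta_{t,i}$, which enters $\hat{V}_t$ through $\hat{v}_{t,i}=v_{t,i}+\delta_{t,i}/t$. Consequently, after expanding the last summand via the update rule exactly as in the proof of Lemma~\ref{lem:alphagvv}, the denominator $T\sum_{j=1}^T(1-\beta_{2j})\prod_{k=1}^{T-j}\beta_{2(T-k+1)}g_{j,i}^2+\delta$ is simply replaced by $T\sum_{j=1}^T(1-\beta_{2j})\prod_{k=1}^{T-j}\beta_{2(T-k+1)}g_{j,i}^2+\delta_{T,i}$, because $T\hat{v}_{T,i}=Tv_{T,i}+\delta_{T,i}$. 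First I would therefore reproduce the numerator bound for $\alpha_t\|\hat{\textbf{g}}_t\|_{\hat{V}_t^{-1}}^2$ at a general step $t$, using $1-\beta_{1j}\leq 1$, Cauchy--Schwarz, and $\beta_{1t}\leq\beta_1$, to arrive at $\frac{\alpha}{1-\beta_1}\sum_{i=1}^d\frac{\sum_{j=1}^t\beta_1^{t-j}g_{j,i}^2}{t\,v_{t,i}+\delta_{t,i}}$.

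Next I would invoke Condition~\ref{condi2} in the form $t\,v_{t,i}\geq\frac{1}{\zeta}\sum_{j=1}^tg_{j,i}^2$, so that the denominator satisfies $t\,v_{t,i}+\delta_{t,i}\geq\frac{1}{\zeta}\bigl(\sum_{j=1}^tg_{j,i}^2+\zeta\delta_{t,i}\bigr)$, giving the per-step bound $\frac{\alpha\zeta}{1-\beta_1}\sum_{i=1}^d\sum_{j=1}^t\beta_1^{t-j}\frac{g_{j,i}^2}{\sum_{k=1}^jg_{k,i}^2+\zeta\delta_{t,i}}$. Summing over $t=1,\dots,T$ and reindexing with $l=t-j$ yields $\frac{\alpha\zeta}{1-\beta_1}\sum_{i=1}^d\sum_{j=1}^T\sum_{l=0}^{T-j}\beta_1^{l}\frac{g_{j,i}^2}{\sum_{k=1}^jg_{k,i}^2+\zeta\delta_{j+l,i}}$, where the regularizer is now indexed by the outer time $t=j+l$ rather than by $j$.

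The one genuinely new step, and the main obstacle, is disentangling this $t$-dependent regularizer from the double sum so that Lemma~\ref{lem:scsum} can be applied. Since $\{\delta_{t,i}\}_{t=1}^T$ is assumed non-increasing in $t$, we have $\delta_{j+l,i}\geq\delta_{T,i}$ for every $j+l\leq T$; enlarging the denominator in this way can only shrink each fraction, so $\frac{g_{j,i}^2}{\sum_{k=1}^jg_{k,i}^2+\zeta\delta_{j+l,i}}\leq\frac{g_{j,i}^2}{\sum_{k=1}^jg_{k,i}^2+\zeta\delta_{T,i}}$. I must be careful to apply monotonicity in exactly this direction: replacing $\delta_{j+l,i}$ by the \emph{smallest} value $\delta_{T,i}$ still gives an upper bound, because a larger denominator produces a smaller ratio. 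After this replacement the inner geometric sum obeys $\sum_{l=0}^{T-j}\beta_1^{l}\leq\frac{1}{1-\beta_1}$, leaving $\frac{\alpha\zeta}{(1-\beta_1)^2}\sum_{i=1}^d\sum_{j=1}^T\frac{g_{j,i}^2}{\sum_{k=1}^jg_{k,i}^2+\zeta\delta_{T,i}}$. Finally I would apply Lemma~\ref{lem:scsum} with the fixed positive constant $\zeta\delta_{T,i}$ in place of $\zeta\delta$, which bounds the inner sum by $\log\bigl(\frac{1}{\zeta\delta_{T,i}}\sum_{j=1}^Tg_{j,i}^2+1\bigr)$ and delivers exactly \eqref{ineq:lmgv}.
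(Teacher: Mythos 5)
Your proposal is correct and follows essentially the same route as the paper's own proof: expand the last summand via the update rule, bound the numerator with Cauchy--Schwarz and $\beta_{1t}\leq\beta_1$, lower-bound the denominator via Condition~\ref{condi2}, use the monotonicity $\delta_{t,i}\geq\delta_{T,i}$ to reduce to a common regularizer before summing the geometric series, and finish with Lemma~\ref{lem:scsum} applied with $\zeta\delta_{T,i}$ in place of $\zeta\delta$. The only cosmetic difference is that you apply the monotonicity step after reindexing with $l=t-j$ while the paper applies it before; the displayed inequality and its direction are correct in both cases.
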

The proof of Lemma \ref{lem:alphagv} can be found in Appendix \ref{app:proofl22}. Based on Lemma \ref{lem:alphagv}, we have
\begin{equation}
\begin{split}
\label{ineq:P2}
P_2&=\frac{1}{2(1-\beta_{1})}\sum_{t=2}^T\alpha_{t-1}\|\hat{\textbf{g}}_{t-1}\|^2_{\hat{V}^{-1}_{t-1}}+\frac{\sum_{t=1}^T\alpha_t\|\hat{\textbf{g}}_t\|^2_{\hat{V}^{-1}_t}}{2(1-\beta_{1})}\\
&\leq\frac{1}{(1-\beta_{1})}\sum_{t=1}^T\alpha_t\|\hat{\textbf{g}}_t\|^2_{\hat{V}^{-1}_t}\\
&\overset{\eqref{ineq:lmgv}}{\leq} \frac{\alpha\zeta}{(1-\beta_1)^3}\sum_{i=1}^d\log\left(\frac{1}{\zeta\delta_{T,i}}\sum_{j=1}^Tg^2_{j,i}+1\right).
\end{split}
\end{equation}
Finally, we turn to upper bound $P_3$:
\begin{equation}
\label{ineq:P3}
\begin{split}
P_3&\leq\sum_{i=1}^d\sum_{t=1}^T\frac{\beta_{1t}}{2\alpha(1-\beta_{1t})}(x_{t,i}-x_{*,i})^2t\hat{v}_{t,i}\\
&\leq \frac{D_{\infty}^2}{2\alpha}\sum_{i=1}^d\sum_{t=1}^T\frac{\beta_{1t}}{1-\beta_{1t}}t(G_{\infty}^2+\delta_{1,i})\\
&\leq \frac{\beta_1D_{\infty}^2}{2\alpha}\sum_{i=1}^d\sum_{t=1}^T\frac{\nu^{t-1}}{1-\beta_1}t(G_{\infty}^2+\delta_{1,i})\\
&=\frac{\beta_1D_{\infty}^2}{2\alpha}\sum_{i=1}^d\frac{(G_{\infty}^2+\delta_{1,i})}{1-\beta_1}\sum_{t=0}^{T-1}\nu^t(t+1)\\
&\overset{\eqref{P31}}{\leq}\frac{\beta_1D_{\infty}^2}{2\alpha}\sum_{i=1}^d\frac{(G_{\infty}^2+\delta_{1,i})}{(1-\beta_1)(\nu-1)^2}\\
&=\frac{\beta_1D_{\infty}^2(dG_{\infty}^2+\sum_{i=1}^d\delta_{1,i})}{2\alpha(1-\beta_1)(\nu-1)^2}.
\end{split}
\end{equation}
We finish the proof by combining \eqref{P1}, \eqref{ineq:P2} and \eqref{ineq:P3}.
\section{Proof of Lemma \ref{lem:alphagv}}
\label{app:proofl22}
Expending the last term in the summation by using the update rule of Algorithm \ref{SAdamD}, we have
\begin{equation}
\begin{split}
\alpha_T\|{\hat{\textbf{g}}}_T\|^2_{\hat{V}^{-1}_{T}}=&\alpha_T\sum_{i=1}^d\frac{\hat{{g}}^2_{T,i}}{{v}_{T,i}+\frac{\delta_{T,i}}{T}}=\alpha \sum_{i=1}^d\frac{\left(\sum_{j=1}^T(1-\beta_{1j})\prod_{k=1}^{T-j}\beta_{1(T-k+1)}g_{j,i}\right)^2}{T\sum_{j=1}^T(1-\beta_{2j})\Pi_{k=1}^{T-j}\beta_{2(T-k+1)}g^2_{j,i}+\delta_{T,i}}.
\end{split}
\end{equation}
The above equality can be further bounded as
\begin{equation}
\begin{split}
\alpha_T\|\hat{\textbf{g}}_T\|^2_{\hat{V}^{-1}_{T}}\leq&\alpha \sum_{i=1}^d\frac{\left(\sum_{j=1}^T\prod_{k=1}^{T-j}\beta_{1(T-k+1)}\right)\left(\sum_{j=1}^T\prod_{k=1}^{T-j}\beta_{1(T-k+1)}g^2_{j,i}\right)}
{T\sum_{j=1}^T(1-\beta_{2j})\Pi_{k=1}^{T-j}\beta_{2(T-k+1)}g^2_{j,i}+\delta_{T,i}}\\
\leq&\alpha \sum_{i=1}^d\frac{\left(\sum_{j=1}^T\beta_1^{T-j}\right)\left(\sum_{j=1}^T\prod_{k=1}^{T-j}\beta_{1(T-k+1)}g^2_{j,i}\right)}
{T\sum_{j=1}^T(1-\beta_{2j})\Pi_{k=1}^{T-j}\beta_{2(T-k+1)}g^2_{j,i}+\delta_{T,i}}\\
\leq& \frac{\alpha}{(1-\beta_1)}\sum_{i=1}^d\frac{\sum_{j=1}^T\beta_1^{T-j}g^2_{j,i}}
{T\sum_{j=1}^T(1-\beta_{2j})\Pi_{k=1}^{T-j}\beta_{2(T-k+1)}g^2_{j,i}+\delta_{T,i}}\\
\overset{\eqref{ineq:thmlowerbound}}{\leq} & \frac{\alpha\zeta}{(1-\beta_1)}\sum_{i=1}^d\frac{\sum_{j=1}^T\beta_1^{T-j}g^2_{j,i}}
{\sum_{j=1}^Tg_{j,i}^2+\zeta\delta_{T,i}}\\
\leq & \frac{\alpha\zeta}{(1-\beta_1)}\sum_{i=1}^d\sum_{j=1}^T\beta_1^{T-j}\frac{g^2_{j,i}}
{\sum_{k=1}^jg_{k,i}^2+\zeta\delta_{T,i}}.
\end{split}
\end{equation}
The first inequality follows from Cauchy-Schwarz inequality and $1-\beta_{1t}\leq1$, and the second  inequality is due to $\beta_{1t}\leq \beta_1$. Let $r_j=\frac{g^2_{j,i}}{\sum_{k=1}^jg^2_{k,i}+\zeta\delta_{T,i}}$. By using similar arguments as in \eqref{eq33}, we have
\begin{equation}
\begin{split}
\sum_{t=1}^T\alpha_t\|\hat{\textbf{g}}_t\|^2_{\hat{V}^{-1}_{t}}
\leq& \frac{\alpha\zeta}{(1-\beta_1)}\sum_{i=1}^d\sum_{t=1}^T\sum_{j=1}^t\beta_{1}^{T-j}\frac{g^2_{j,i}}{\sum_{k=1}^jg^2_{k,i}+\zeta \delta_{t,i}}\\
\leq&
\frac{\alpha\zeta}{(1-\beta_1)}\sum_{i=1}^d\sum_{t=1}^T\sum_{j=1}^t\beta_{1}^{T-j}\frac{g^2_{j,i}}{\sum_{k=1}^jg^2_{k,i}+\zeta \delta_{T,i}}\\
=&\frac{\alpha\zeta}{(1-\beta_1)}\sum_{i=1}^d\sum_{t=1}^T\sum_{j=1}^tr_j\\
\leq& \frac{\alpha\zeta}{(1-\beta_1)}\sum_{i=1}^d\sum_{j=1}^T\frac{\sum_{l=0}^{T-j}\beta^l_{1}g^2_{j,i}}
{\sum_{k=1}^jg^2_{k,i}+\zeta\delta_{T,i}}\\
\leq & \frac{\alpha\zeta}{(1-\beta_1)^2}\sum_{i=1}^d\sum_{j=1}^T\frac{g^2_{j,i}}{\sum_{k=1}^jg_{k,i}^2+\zeta\delta_{T,i}}\\
\overset{\eqref{ineq:lemma44}}{\leq} & \frac{\alpha\zeta}{(1-\beta_1)^2}\sum_{i=1}^d\log\left(\frac{1}{\zeta\delta_{T,i}}\sum_{j=1}^Tg^2_{j,i}+1\right).
\end{split}
\end{equation}
\end{document}